\documentclass[journal]{IEEEtai}

\usepackage[colorlinks,urlcolor=blue,linkcolor=blue,citecolor=blue]{hyperref}

\usepackage{color,array}
\usepackage[dvipsnames]{xcolor}

\usepackage{graphicx}
\usepackage{graphicx}%
\usepackage{multirow}%
\usepackage{amsmath,amssymb,amsfonts}%
\usepackage{amsthm}%
\usepackage{mathrsfs}%
\usepackage{xcolor}%
\usepackage{textcomp}%
\usepackage{manyfoot}%
\usepackage{booktabs}%
\usepackage{algorithm}%
\usepackage{algorithmic}
\usepackage{listings}%
\usepackage{subfigure}
\usepackage{float}
\usepackage{wrapfig}

\newcommand{\innerproductcomma}[2]{\langle #1, #2 \rangle}

\newcommand{\reone}[1]{{#1}}

\newcommand{\fix}[1]{{#1}}

\newtheorem{theorem}{Theorem}
\newtheorem{proposition}[theorem]{Proposition}%
\newtheorem{definition}{Definition}%

\hyphenation{op-tical net-works semi-conduc-tor}



\setcounter{page}{1}

\begin{document}

\title{DeepHGCN: Toward Deeper Hyperbolic Graph Convolutional Networks} 

\author{Jiaxu~Liu,
        Xinping~Yi,
        and~Xiaowei~Huang
\thanks{The manuscript is submitted for review on October 10, 2023. Revised on May 22, 2024 and July 16, 2024. Accepted on August 03, 2024. (Corresponding author: Xinping Yi)}
\thanks{J. Liu and X. Huang are with the School of Electrical Engineering, Electronics and Computer Science, University of Liverpool, Liverpool, L69 3GJ UK; e-mail: \texttt{\{jiaxu.liu, xiaowei.huang\}@liverpool.ac.uk}.}
\thanks{X. Yi is with the National Mobile Communications Research Laboratory, Southeast University, Nanjing, China; email: \texttt{xyi@seu.edu.cn}.}
\thanks{J. Liu is supported by the Liverpool-CSC scholarship [202208890034]. X. Huang is supported by the UK EPSRC through End-to-End Conceptual Guarding of Neural Architectures [EP/T026995/1].}
}

\markboth{IEEE Transactions on Artificial Intelligence, Vol. 00, No. 0, Month 2020}
{Jiaxu Liu \MakeLowercase{\textit{et al.}}: Bare Demo of IEEEtai.cls for IEEE Journals of IEEE Transactions on Artificial Intelligence}

\maketitle

\begin{abstract}
Hyperbolic graph convolutional networks (HGCNs) have demonstrated significant potential in extracting information from hierarchical graphs. However, existing HGCNs are limited to shallow architectures due to the computational expense of hyperbolic operations and the issue of over-smoothing as depth increases. Although treatments have been applied to alleviate over-smoothing in GCNs, developing a hyperbolic solution presents distinct challenges since operations must be carefully designed to fit the hyperbolic nature. Addressing these challenges, we propose DeepHGCN, the first deep multi-layer HGCN architecture with dramatically improved computational efficiency and substantially reduced over-smoothing. DeepHGCN features two key innovations: (1) a novel hyperbolic feature transformation layer that enables fast and accurate linear mappings, and (2) techniques such as hyperbolic residual connections and regularization for both weights and features, facilitated by an efficient hyperbolic midpoint method. Extensive experiments demonstrate that DeepHGCN achieves significant improvements in link prediction and node classification tasks compared to both Euclidean and shallow hyperbolic GCN variants.
\end{abstract}

\begin{IEEEImpStatement}
Graph-structured data presents unique challenges in machine learning due to its complex nature. Traditional Graph Convolutional Networks (GCNs) primarily use Euclidean space for node embeddings, which struggle to capture the intricacies of hierarchical graphs. While HGCNs offer a solution, they often face limitations due to computational challenges and over-smoothing issues, particularly in deeper architectures. In this article, we introduce DeepHGCN, a novel approach for deep HGCNs. This architecture employs a new hyperbolic feature transformation layer and several auxiliary techniques to address these challenges. Testing on datasets with various geometries shows that DeepHGCN outperforms both Euclidean-based and other hyperbolic GCN methods in tasks such as link prediction and node classification. This demonstrates the efficacy of our approach in enhancing hierarchical graph-structured learning across various domains.
\end{IEEEImpStatement}

\begin{IEEEkeywords}
Graph neural networks, Riemannian manifold, hyperbolic operations, deep model architecture.
\end{IEEEkeywords}


\section{Introduction}
Graph convolutional networks (GCN) \cite{defferrard2016convolutional,kipf2017semisupervised,hamilton2017inductive} have emerged as a promising approach for analyzing graph-structured data \cite{li2024guest}, \textit{e.g.,} social networks \cite{clauset2008hierarchical}, protein interaction networks \cite{zitnik2019evolution}, human skeletons \cite{yan2018spatial}, drug molecules \cite{duvenaud2015convolutional}, cross-modal retrieval \cite{li2024educross}, to name a few. Conventional GCN methods embed node representations into Euclidean latent space for downstream tasks. However, Bourgain's theorem \cite{linial1995geometry} indicates that the Euclidean space with arbitrary dimensions fails to embed hierarchical graphs with low distortion, suggesting the inadequacy of the Euclidean space to accommodate complex hierarchical data \cite{clauset2008hierarchical,krioukov2010hyperbolic,papadopoulos2012popularity,bai2024haqjsk}.

Recently, the hyperbolic space \textit{a.k.a.} Riemannian manifold of constant negative sectional curvature \cite{gromov1987hyperbolic, hamann2018tree, ungar2008gyrovector}, has gained increasing attention in processing non-Euclidean data. Since the exponentially expanding capacity of hyperbolic space satisfies the demand for hierarchical data that requires an exponential amount of branching space, embedding graphs to such a manifold naturally promotes learning hierarchical information. Based on two prevalent isomorphic models for hyperbolic space (Fig. \ref{Fig.illustration-ball}), \cite{ganea2018hyperbolic} and \cite{nickel2018learning} introduced the basic operations for constructing hyperbolic neural networks (HNN). Subsequently, the researchers generalized GCN operations to hyperbolic domains and derived a series of hyperbolic graph convolutional network (HGCN) variants \cite{liu2019hyperbolic, chami2019hyperbolic, gulcehre2018hyperbolic, zhang2021lorentzian, dai2021hyperbolic, chen2021fully}. These hyperbolic models are more capable of generating high-quality representations with low embedding dimensions, making them particularly advantageous in low-memory circumstances.

Despite their popularity, most HGCNs only achieve competitive performance with a 2-layer model. This limitation hinders their ability to effectively gather information from higher-order neighbors. However, developing a deeper HGCN model faces two main challenges. First, the computational complexity involved in hyperbolic operations, particularly feature transformation, prevents HGCNs from going deeper. Second, when more layers are added, node representations within the same connected component become increasingly indistinguishable.

This phenomenon is inherited from their GCN counterparts, known as \textit{over-smoothing} \cite{li2018deeper}, which could severely degrade the performance of multi-layer GCNs. For Euclidean GCNs, the over-smoothing issue has been defined and extensively studied in \cite{chen2020simple,xu2018representation,rong2019dropedge, zhou2021dirichlet,huang2022graph}, where the proposed techniques ensure that deep Euclidean GCNs outperform shallow counterparts as the depth increases. Given the hyperbolic representation, it is evidenced empirically that HGCNs still suffer from over-smoothing issues as network depth increases.

Aiming to address above challenges, in this paper, we propose a HGCN variant that is adaptive to depth variation, namely the DeepHGCN (Fig.~\ref{Fig.architecture.2}), by stacking a number of carefully constructed HGCN layers with computationally-efficient feature transformation, which can effectively prevent over-smoothing and deliver improved accuracy over state-of-the-art models with a deep architecture. Our contributions toward deep HGCNs are summarized as follows:

\fix{
\begin{itemize}
    \item \textbf{Scalable and Efficient Backbone.} Dealing with the computational complexity when increasing the depth of multi-layer HGCNs, we derive a novel hyperbolic fully connected layer that offers better efficiency and expressiveness for Poincaré feature transformation. In addition, M\"obius gyromidpoint \cite{shimizu2020hyperbolic, ungar2005analytic} is also carefully incorporated as an accurate hyperbolic midpoint method that serves as the basis for not only message aggregation, but also all hyperbolic operations in our proposed techniques within the DeepHGCN architecture.
    \item \textbf{Extensive Techniques.} To address the over-smoothing issue, we generalize the concept of Dirichlet energy to the hyperbolic space, effectively tracking the smoothness of hyperbolic embeddings. Guided by the measure of hyperbolic Dirichlet energy, DeepHGCN is specifically powered by three techniques, namely the initial residual, weight alignment, and feature regularization. Evidently, DeepHGCN effectively alleviates the over-smoothing problem occurred in the Poincar\'e ball and can be naturally generalized to the Lorentz model.
    \item \textbf{Experiments and Ablation Studies.} Results on benchmark datasets (Tab.~\ref{tb:summary-lpnc}-\ref{tb:summary-deep}) have validated the efficacy of our method for both node classification and link prediction under various layer settings. Additionally, ablation study in Sec.~\ref{sec:ablation_study} demonstrate that all techniques are necessary for mitigating the over-smoothing issue.
\end{itemize}

\noindent The code is available at \url{https://github.com/ljxw88/deephgcn}
}
\begin{figure}
    \centering
    \includegraphics[width=0.99\linewidth]{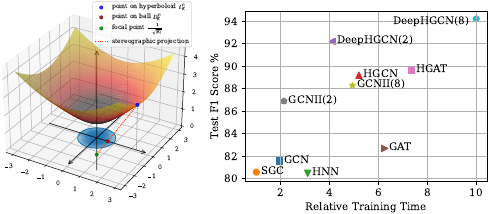}
    \vspace{-10pt}
    \caption{\textbf{Left}: Two prevalent hyperbolic models, isometric projection through the red line, where $\operatorname{P}_{\mathbb{D}\to\mathbb{L}}$: $\textcolor{red}{\bullet}\to \textcolor{blue}{\bullet}$ and $\operatorname{P}_{\mathbb{L}\to\mathbb{D}}$: $\textcolor{blue}{\bullet}\to \textcolor{red}{\bullet}$; \textbf{Right}: Performance over training time on \textit{Airport} in 5k epochs. DeepHGCN(2-layer) outperforms existing hyperbolic models and is more efficient. Increasing depth to DeepHGCN(8) bring further improvements.}
    \label{Fig.illustration-ball}
    \vspace{-10pt}
\end{figure}

\vspace{-10pt}
\section{Background}
\subsection{Brief Review of Riemannian Geometry}
\label{sec:bried-review-riemannian}
A \textit{manifold} $\mathcal{M}$ is a topological space that is locally Euclidean. The \textit{tangent space} $\mathcal{T}_\mathbf{x}\mathcal{M}$ is a real vector space of the same dimension as $\mathcal{M}$ attached to every point $\mathbf{x}\in \mathcal{M}$. All vectors in $\mathcal{T}_\mathbf{x}\mathcal{M}$ pass tangentially through $\mathbf{x}$. The \textit{metric tensor} $g_\mathbf{x}$ at point $\mathbf{x}$ defines an inner product on the associated tangent space, \textit{i.e.}, $g_\mathbf{x}:\mathcal{T}_\mathbf{x}\mathcal{M}\times \mathcal{T}_\mathbf{x}\mathcal{M} \to \mathbb{R}$. A \textit{Riemannian manifold} $(\mathcal{M}, g)$ is defined as a manifold equipped with Riemannian metric $g$. The metric tensor provides local geometric properties such as angles and curve lengths. A \textit{geodesic} is the shortest path between two points on the manifold. The \textit{exponential map} $\exp_\mathbf{x}: \mathcal{T}_\mathbf{x}\mathcal{M} \to \mathcal{M}$ defines a mapping of a tangent vector to a point on the manifold, and the \textit{logarithmic map} is the inverse $\log_\mathbf{x}: \mathcal{M} \to \mathcal{T}_\mathbf{x}\mathcal{M}$. Extended reviews are provided in Appendix~A.

\vspace{-10pt}
\subsection{Hyperbolic Geometry}
The hyperbolic space $\mathbb{H}^n_\kappa$ is a smooth Riemannian manifold with a constant sectional curvature $\kappa<0$ \cite{benedetti1992lectures}. It is usually defined via five isometric hyperbolic models, among which the $n$-dimensional Poincar\'e ball model $\mathbb{D}^n_\kappa= (\mathcal{D}^n_\kappa, g^{\mathbb{D}})$ and the Lorentz model (hyperboloid) $\mathbb{L}^n_\kappa = (\mathcal{L}^n_\kappa, g^{\mathbb{L}})$ are frequently used. The manifolds of $\mathbb{D}^n_\kappa$ are the projections of $\mathbb{L}^n_\kappa$ onto the $n$-dimensional space-like hyperplanes (Fig.~\ref{Fig.illustration-ball} left). In this paper, we align with \cite{ganea2018hyperbolic} and build our model upon the Poincar\'e ball model. As we still need the knowledge of the Lorentz model to complete our theories, we delegate the instruction and basic operations to Appendix~A and Tab.~\ref{tb:summary-operation}.

\textit{Poincar\'e Ball Model.} The $n$-dimensional Poincar\'e ball is defined as the Riemannian manifold $\mathbb{D}^n_\kappa = (\mathcal{D}^n_\kappa, g^{\mathbb{D}})$, with point set $\mathcal{D}^n_\kappa = \{\mathbf{x} \in \mathbb{R}^n : \|\mathbf{x}\| < -\frac{1}{\kappa}\}$ and Riemannian metric $g^{\mathbb{D}}_\kappa(\mathbf{x}) = (\lambda_\mathbf{x}^\kappa)^2 g^\mathbb{E}$, where $\lambda_\mathbf{x}^\kappa = \frac{2}{1+\kappa\|\mathbf{x}\|^2}$ (the conformal factor) and $g^\mathbb{E} = \mathbf{I}_n$. The Poincar\'e metric tensor induces various geometric properties, \textit{e.g.}, distances $d_\mathbb{D}^\kappa(\mathbf{x}, \mathbf{y})$, inner products $\innerproductcomma{\mathbf{u}}{\mathbf{v}}_\mathbf{x}^\kappa$, geodesics $\gamma_{\mathbf{x}, \mathbf{v}}(t)$ \cite{nickel2017poincare}, and more. The geodesics also induce the definition of exponential and logarithmic maps.
which are denoted
at $\mathbf{x}\in\mathbb{D}^n_\kappa$ as $\exp_\mathbf{x}^\kappa$ and $\log_\mathbf{x}^\kappa$, respectively. The M\"obius gyrovector space \cite{ungar2008gyrovector} offers an algebraic framework to treat the Poincar\'e coordinates as vector-like mathematical objects (gyrovectors). The gyrovectors are equipped with series of operations, \textit{e.g.} the vector addition $\oplus_\kappa$ and matrix-vector multiplication $\otimes_\kappa$. For brevity, we give instruction to Poincar\'e operations in Appendix~A and Tab.~\ref{tb:summary-operation}.

\section{Augmented HGCN Backbone}
\label{sec:hyperbolic-feature-trans}
\subsection{Efficient Feature Transformation}
Training multi-layer GCNs requires a fast and accurate linear transformation $\mathcal{F}:\mathbb{R}^{d_1} \to \mathbb{R}^{d_2}$ as backbone. However, an obvious deficiency of traditional hyperbolic linear layer is the propagation efficiency. In this paper, we propose a better unified linear layer $\mathcal{F}^\kappa_{\mathbb{D}}$ for efficient feature transformation within the Poincar\'e ball. With synthetic hyperbolic dataset, we show that $\mathcal{F}^\kappa_{\mathbb{D}}$ is faster and more expressive than both the naive HNN \cite{ganea2018hyperbolic} and PFC layer \cite{shimizu2020hyperbolic}. 

\begin{theorem}
\label{thm:poincare-fc-layer}
Given $\mathbf{h}^{} \in \mathbb{D}^{d_1}_\kappa$, Euclidean weight and bias parameter ${\mathbf{W}} \in \mathbb{R}^{d_2 \times d_1}$ and $\mathbf{b}_1, \mathbf{b}_2\in\mathbb{R}^{d_2}$. A more computational-efficient and expressive feature transformation $\mathcal{F}^\kappa_{\mathbb{D}}: \mathbb{D}^{d_1}_\kappa \to \mathbb{D}^{d_2}_\kappa$ within high dimensional Poincar\'e ball can be formulated by
\begin{align}
    &\mathcal{F}^\kappa_\mathbb{D}(\mathbf{h} ; \mathbf{W}, \mathbf{b}):= \frac{\phi({\mathbf{h}};\mathbf{W}, \mathbf{b})}{1 + \sqrt{|\kappa| \|\phi({\mathbf{h}};\mathbf{W}, \mathbf{b})\|^2 + 1 }}, \label{eq:poincare-fc-layer}
\end{align}
where $\phi(\cdot)$ is formulated as
\begin{align}
    \phi({\mathbf{h}};\mathbf{W}, \mathbf{b}) = \frac{2\sqrt{|\kappa|}\mathbf{W}\mathbf{h} + \mathbf{b}_1(1-\kappa\|\mathbf{h}\|^2)}{\sqrt{|\kappa|}(1 + \kappa\|\mathbf{h}\|^2)} + \mathbf{b}_2.\label{eq:poincare-fc-layer-2}
\end{align}
\end{theorem}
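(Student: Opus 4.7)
The plan is to reveal $\mathcal{F}^\kappa_\mathbb{D}$ as the composition of three elementary maps: a stereographic projection from the Poincar\'e ball to the Lorentz hyperboloid, an affine linear transformation in the ambient space of the Lorentz model, and a stereographic projection back to the Poincar\'e ball. Once this structure is exposed, the formula (\ref{eq:poincare-fc-layer})--(\ref{eq:poincare-fc-layer-2}) follows from pure substitution, and well-definedness reduces to a one-line norm inequality. The auxiliary claims (efficiency, expressiveness) are not statements about equalities, so I would argue them by operation counting rather than by proof.

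The first step is to establish the two projection formulas. Sending $(x_0,\mathbf{x})\in\mathcal{L}^{d_2}_\kappa$ through the standard projection pole at $(-1/\sqrt{|\kappa|},\mathbf{0})$ yields
\begin{equation*}
\operatorname{P}_{\mathbb{L}\to\mathbb{D}}(x_0,\mathbf{x}) = \frac{\mathbf{x}}{1+\sqrt{|\kappa|}\,x_0} = \frac{\mathbf{x}}{1+\sqrt{|\kappa|\|\mathbf{x}\|^2+1}},
\end{equation*}
using $\sqrt{|\kappa|}x_0=\sqrt{|\kappa|\|\mathbf{x}\|^2+1}$ from the hyperboloid constraint. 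The inverse direction, obtained by solving the same line for $\mathbf{h}=\mathbf{x}/(1+\sqrt{|\kappa|}x_0)$, gives
\begin{equation*}
\operatorname{P}_{\mathbb{D}\to\mathbb{L}}(\mathbf{h}) = \left(\frac{1-\kappa\|\mathbf{h}\|^2}{\sqrt{|\kappa|}(1+\kappa\|\mathbf{h}\|^2)},\; \frac{2\mathbf{h}}{1+\kappa\|\mathbf{h}\|^2}\right).
\end{equation*}

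The second step is to recognize that $\phi(\mathbf{h};\mathbf{W},\mathbf{b})$ in (\ref{eq:poincare-fc-layer-2}) is exactly $\mathbf{W}\mathbf{x} + \mathbf{b}_1 x_0 + \mathbf{b}_2$ with $(x_0,\mathbf{x})=\operatorname{P}_{\mathbb{D}\to\mathbb{L}}(\mathbf{h})$: substituting the two components from the previous display and placing them over the common denominator $\sqrt{|\kappa|}(1+\kappa\|\mathbf{h}\|^2)$ reproduces the given expression termwise. Thus $\phi$ should be read as an affine map acting on the lifted Lorentz coordinate of $\mathbf{h}$ in the ambient $\mathbb{R}^{d_2}$; the pair $(\mathbf{b}_1,\mathbf{b}_2)$ parameterizes respectively a curvature-aware bias tied to the timelike direction and a free ambient translation. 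The outer expression in (\ref{eq:poincare-fc-layer}) is then exactly $\operatorname{P}_{\mathbb{L}\to\mathbb{D}}$ applied to the Lorentz point whose spatial part is $\phi$, by the identity derived in the first step.

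The third step is well-definedness: we need $\mathcal{F}^\kappa_\mathbb{D}(\mathbf{h})\in\mathcal{D}^{d_2}_\kappa$, i.e., $\sqrt{|\kappa|}\,\|\mathcal{F}^\kappa_\mathbb{D}(\mathbf{h})\|<1$. Setting $u=\sqrt{|\kappa|}\,\|\phi\|$ the required inequality reduces to $u<1+\sqrt{u^2+1}$, which holds for every $u\ge 0$. No hyperboloid-constraint check is needed on $\phi$ itself because the Lorentz-to-Poincar\'e formula only depends on the spatial coordinate, with the timelike coordinate reconstructed as $\sqrt{\|\phi\|^2+1/|\kappa|}$.

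The main obstacle is \emph{not} the equality itself, which is bookkeeping once the lifted interpretation is in hand; it is justifying the two adjectives \emph{efficient} and \emph{expressive}. For efficiency I would argue by operation counting: the HNN construction $\exp_\mathbf{0}^\kappa\circ\mathbf{W}\circ\log_\mathbf{0}^\kappa$ and the PFC layer both invoke $\tanh$/$\tanh^{-1}$ at every layer, whereas (\ref{eq:poincare-fc-layer})--(\ref{eq:poincare-fc-layer-2}) involves only rational operations and a single square root, eliminating the transcendental calls that dominate per-layer cost. For expressiveness I would observe that the two bias terms $\mathbf{b}_1,\mathbf{b}_2$ contribute independent affine degrees of freedom in the Lorentz ambient space, strictly enlarging the family of representable maps relative to a single-bias formulation that only perturbs along one direction; a full separation would be deferred to the empirical comparison promised in the paragraph preceding the theorem.
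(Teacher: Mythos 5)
Your proof takes essentially the same route as the paper's: you decompose $\mathcal{F}^\kappa_\mathbb{D}$ as $\operatorname{P}_{\mathbb{L}\to\mathbb{D}}\circ(\text{affine map in the Lorentz ambient space with time-component re-normalization})\circ\operatorname{P}_{\mathbb{D}\to\mathbb{L}}$, which is exactly the paper's derivation; your identification of $\phi=\mathbf{W}\mathbf{x}+\mathbf{b}_1 x_0+\mathbf{b}_2$ with $(x_0,\mathbf{x})=\operatorname{P}_{\mathbb{D}\to\mathbb{L}}(\mathbf{h})$ corresponds to the paper's slicing of the augmented weight $\mathbf{W}\in\mathbb{R}^{d_2\times(d_1+1)}$ into $[W_t\,\|\,\mathbf{W}_s]$ plus a bias. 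Your explicit well-definedness check $u<1+\sqrt{u^2+1}$ is a small, welcome addition that the paper leaves implicit in its ``easy to verify'' remark.
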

Given a point $\mathbf{x}$ on the hyperboloid, an arbitrary transformation matrix $\mathbf{W}\in\mathbb{R}^{d_2 \times (d_1 +1)}$ can be multiplied to $\mathbf{x}$. The Lorentzian \textit{re-normalization trick} ensures $\mathbf{x}$ still lies in the hyperboloid. Specifically, given $\mathbf{x}\in \mathbb{L}^{d_1}_\kappa (\subset\mathbb{R}^{d_1 + 1})$ and $\mathbf{W}\in \mathbb{R}^{d_2 \times (d_1 +1)}$, multiplying them gives $\mathbf{W}\mathbf{x}\in \mathbb{R}^{d_2}$ which is not essentially in the hyperboloid. To force the Lorentz constraint $\forall\mathbf{x}\in\mathbb{L}_\kappa:\innerproductcomma{\mathbf{x}}{\mathbf{x}}_\mathcal{L} = \frac{1}{\kappa}$, we re-normalize the time axis as $\sqrt{\|\mathbf{W}\mathbf{x}\|^2 - \frac{1}{\kappa}}$, such that the point set constraint is not violated. Thus a simple Lorentz transformation with re-normalization trick is expressed as
\begin{equation}
    \mathbf{x}' \leftarrow \begin{bmatrix}
        \sqrt{\|\mathbf{W}\mathbf{x}\|^2 - \frac{1}{\kappa}}\\ \mathbf{W}\mathbf{x}
    \end{bmatrix}. \label{eq:reformulated-matrix-layer}
\end{equation}
In a more general setting with bias parameter, $\mathbf{W}\mathbf{x}$ can be $\phi(\mathbf{x},\mathbf{W}, \mathbf{b}):\mathbb{L}^{d_1}\to\mathbb{R}^{d_2}$. It is easy to verify the Lorentzian constraint of Eq.~(\ref{eq:reformulated-matrix-layer}). 
Next, we give the bijection between an arbitrary point on the hyperboloid $\mathbf{z} = \begin{bmatrix}
    z_t \\
    \mathbf{z}_s
\end{bmatrix}\in \mathbb{L}^n_\kappa$ and the corresponding point on the Poincar\'e ball $\mathbf{x}\in \mathbb{D}^n_\kappa$ (Fig.~\ref{Fig.illustration-ball} left) as follows
\begin{align}
    & \mathbb{L}^n_\kappa\to \mathbb{D}^n_\kappa : \operatorname{P}_{\mathbb{L}\to \mathbb{D}}(\mathbf{z}) = \frac{\mathbf{z}_s}{1 + \sqrt{|\kappa|}z_t},\label{eq:L-to-D}\\
    & \mathbb{D}^n_\kappa\to \mathbb{L}^n_K : \operatorname{P}_{\mathbb{D}\to \mathbb{L}}(\mathbf{x}) = \begin{bmatrix}
        \frac{1 - \kappa \|\mathbf{x}\|^2 }{\sqrt{|\kappa|} + \kappa\sqrt{|\kappa|}\|\mathbf{x}\|^2}\\
        \frac{2\mathbf{x}}{1 + \kappa\|\mathbf{x}\|^2}
    \end{bmatrix}\label{eq:D-to-L}.
\end{align}
Therefore using Eq.~(\ref{eq:D-to-L}), given a point $\mathbf{x}\in \mathbb{D}^{d_1}_\kappa$, we derive the corresponding point on the hyperboloid $\hat{\mathbf{x}} = \begin{bmatrix}
    \frac{1 - \kappa \|\mathbf{x}\|^2 }{\sqrt{|\kappa|} + \kappa\sqrt{|\kappa|}\|\mathbf{x}\|^2}\\
    \frac{2\mathbf{x}}{1 + \kappa\|\mathbf{x}\|^2} 
\end{bmatrix}\in \mathbb{L}^{d_1}_\kappa(\subset \mathbb{R}^{d_1 + 1})$. Employing Eq.~(\ref{eq:reformulated-matrix-layer}), with a transformation matrix $\mathbf{W}\in\mathbb{R}^{d_2 \times (d_1 + 1)}$, we get $\mathbf{x}' \gets \begin{bmatrix}
    \sqrt{\|\phi(\hat{\mathbf{x}})\|^2 - \frac{1}{\kappa}} \\
    \phi(\hat{\mathbf{x}})
\end{bmatrix}\in \mathbb{L}^{d_2}_\kappa(\subset \mathbb{R}^{d_2 + 1})$, where $\phi(\hat{\mathbf{x}}) = \mathbf{W}\hat{\mathbf{x}} + \mathbf{b}$. Applying the reverse mapping Eq.~(\ref{eq:L-to-D}) gives
\begin{equation}
    \operatorname{P}_{\mathbb{L}\to \mathbb{D}}(\mathbf{x}') = \phi(\hat{\mathbf{x}}) \left(1 + \sqrt{|\kappa| \|\phi(\hat{\mathbf{x}})\|^2 - \operatorname{sgn}(\kappa)}\right)^{-1},\label{eq:poincare-fc-layer-explain}
\end{equation}
which gives us the form in Eq.~(\ref{eq:poincare-fc-layer}) as $\kappa<0$ in the Poincar\'e ball model. Since $\mathbf{W}$ is an arbitrary parameter in $\mathbb{R}$, we slice $\mathbf{W}\in \mathbb{R}^{d_2 \times (d_1 + 1)}$ as $\mathrm{concat}\left[W_t \in \mathbb{R}^{d_2}\| \mathbf{W}_s\in \mathbb{R}^{d_2 \times d_1}\right]$, therefore term $\mathbf{W}\hat{\mathbf{x}}$ can be reformulated as
\begin{align}
    \frac{2\sqrt{|\kappa|}\mathbf{W}_s\mathbf{x} + W_t(1- \kappa\|\mathbf{x}\|^2)}{\sqrt{|\kappa|} + \kappa\sqrt{|\kappa|} \|\mathbf{x}\|^2},\label{eq:poincare-fc-layer-2-explain}
\end{align}
which arrives at the form in Eq.~(\ref{eq:poincare-fc-layer-2}). Eq.~(\ref{eq:poincare-fc-layer-explain}-\ref{eq:poincare-fc-layer-2-explain}) together concludes Thm.~\ref{thm:poincare-fc-layer}.

\begin{proposition}
\label{prop:more-expressive-linear}
Given the $i$-th node representation $\mathbf{h}^{}_i\in\mathbb{D}^{d_1}_\kappa$, the hyperbolic feature transformation in Thm.~\ref{thm:poincare-fc-layer} $\mathbf{h}_i \leftarrow\mathcal{F}^\kappa_\mathbb{D}(\mathbf{h}^{}_i ; \mathbf{W}, \mathbf{b})$ yields more expressive node embeddings.
\end{proposition}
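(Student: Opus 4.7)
The plan is to interpret the phrase \emph{more expressive} as strict set inclusion of the realizable mapping families: I would show that the family of transformations $\{\mathcal{F}^\kappa_\mathbb{D}(\cdot;\mathbf{W},\mathbf{b}_1,\mathbf{b}_2)\}$ contains, as proper subfamilies, the HNN map of \cite{ganea2018hyperbolic} and the PFC map of \cite{shimizu2020hyperbolic}, and admits outputs that neither baseline can realize for any choice of its parameters. This reduces Proposition \ref{prop:more-expressive-linear} to (i) two reductions and (ii) a non-redundancy argument for the extra parameters.

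For the reductions, I would start from the derivation already given for Theorem \ref{thm:poincare-fc-layer}. Setting $\mathbf{b}_1=\mathbf{0}$ in Eq.~(\ref{eq:poincare-fc-layer-2}) removes the time-coordinate contribution $W_t$ arising from the slice $\mathbf{W}=\mathrm{concat}[W_t\|\mathbf{W}_s]$ in Eq.~(\ref{eq:poincare-fc-layer-2-explain}); the resulting layer is the Lorentz-linear-plus-bias map of \cite{chen2021fully} viewed in Poincar\'e coordinates, which coincides (up to the projection $\operatorname{P}_{\mathbb{L}\to\mathbb{D}}$) with the PFC layer once one re-parameterizes the scaling through $\mathbf{b}_2$. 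Further setting $\mathbf{b}_2=\mathbf{0}$ and restricting $\mathbf{W}_s$ to an orthogonal block reproduces the tangent-space transformation $\exp_0^\kappa\!\circ\,\mathbf{W}\!\circ\log_0^\kappa$ of the original HNN layer, since at $\mathbf{h}=\mathbf{0}$ the formula collapses to the identity-bias map and the Lorentz re-normalization then coincides with the exponential-map formula at the origin. These two reductions witness that both baseline function classes are contained in the family defined by $\mathcal{F}^\kappa_\mathbb{D}$.

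For strict expressiveness, I would argue that the pair $(\mathbf{b}_1,\mathbf{b}_2)$ contributes $2d_2$ genuinely independent degrees of freedom beyond what the baselines allow. The key observation is that in Eq.~(\ref{eq:poincare-fc-layer-2}) the bias $\mathbf{b}_1$ enters $\phi$ multiplied by the scalar $(1-\kappa\|\mathbf{h}\|^2)/(\sqrt{|\kappa|}(1+\kappa\|\mathbf{h}\|^2))$, which is a strictly non-constant function of $\|\mathbf{h}\|$, while $\mathbf{b}_2$ enters as a plain additive shift. Evaluating $\phi$ at two points $\mathbf{h}^{(1)},\mathbf{h}^{(2)}$ with $\|\mathbf{h}^{(1)}\|\neq\|\mathbf{h}^{(2)}\|$ yields a $2\times 2$ system in $(\mathbf{b}_1,\mathbf{b}_2)$ with a Vandermonde-type determinant that is nonzero, so $\mathbf{b}_1$ cannot be absorbed into $\mathbf{b}_2$ or into $\mathbf{W}_s$ by any reparameterization. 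Hence $\mathcal{F}^\kappa_\mathbb{D}$ realizes maps that neither the HNN layer nor the PFC layer can match.

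The main obstacle, and where I would spend most of the proof, is making the non-redundancy step airtight. A naive parameter count is suggestive but not conclusive, because hyperbolic layers are nonlinear and distinct parameterizations can collide on the restricted input domain $\mathbb{D}^{d_1}_\kappa$. I would therefore frame non-redundancy as an injectivity statement on the parameter-to-function map, restricted to the nonlinear $\|\mathbf{h}\|$-dependence identified above, and verify it by the two-point evaluation argument sketched. A secondary, subtler issue is ensuring that the PFC reduction really matches at the level of functions rather than only at the level of orbits under the M\"obius isometry group; this I would handle by pinning the origin as a fixed reference point and checking equality there, then extending by isometry-equivariance to all of $\mathbb{D}^{d_1}_\kappa$.
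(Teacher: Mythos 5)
Your plan interprets ``more expressive'' as strict containment of realizable function families and tries to exhibit the HNN and PFC maps as specializations of $\mathcal{F}^\kappa_\mathbb{D}$ with some parameters frozen. That is a natural reading, but it is not what the paper does, and the containment you would need does not actually hold, so the plan would fail at its first step.

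The core of the paper's argument is the structural observation packaged in Prop.~\ref{prop:lprojection-pfc}: after stereographic projection to the hyperboloid, PFC's output has spatial component exactly $\boldsymbol{\omega}$, and the time coordinate is a re-normalization $\sqrt{\|\boldsymbol{\omega}\|^2-1/\kappa}$. So both PFC (Eq.~\ref{eq:pfc-formulation}) and $\mathcal{F}^\kappa_\mathbb{D}$ (Eq.~\ref{eq:poincare-fc-layer}) share the \emph{same} constraint-preserving wrapper and differ only in the choice of the unconstrained function $\boldsymbol{\omega}$: PFC uses the transcendental $\boldsymbol{\omega}_j=\frac{1}{\sqrt{|\kappa|}}\sinh(\sqrt{|\kappa|}\,\nu_j^\kappa(\mathbf{x}))$ (Eq.~\ref{eq:pfc-formulation-omega}), while $\mathcal{F}^\kappa_\mathbb{D}$ uses the rational map $\phi$ (Eq.~\ref{eq:poincare-fc-layer-2}). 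Prop.~\ref{prop:more-expressive-linear} is then argued informally (``since any $\boldsymbol{\omega}$ preserves the Lorentz constraint, we are free to choose $\boldsymbol{\omega}=\mathrm{MLP}(\mathbf{x})$'') and validated empirically in Fig.~2 and Tab.~\ref{tb:cmp-cls}; there is no formal set-inclusion claim.

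Concretely where your reductions break: there is no choice of $\mathbf{W},\mathbf{b}_1,\mathbf{b}_2$ for which the rational $\phi$ of Eq.~\ref{eq:poincare-fc-layer-2} equals the $\sinh$-wrapped MLR of Eq.~\ref{eq:pfc-formulation-omega} as a function of $\mathbf{h}$ --- the additive bias $\mathbf{b}_2$ cannot introduce a $\sinh$-nonlinearity, so PFC is not a subfamily. Likewise the HNN reduction only goes through when $\mathbf{W}$ is orthogonal and square (in which case both implement a Poincar\'e rotation); for generic $\mathbf{W}$, $\exp^\kappa_0(\mathbf{W}\log^\kappa_0(\mathbf{h}))$ contains $\tanh/\tanh^{-1}$ compositions that $\phi$ cannot replicate. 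Because the containment premise is false, your two-point Vandermonde argument for non-redundancy, while internally reasonable, is proving independence of parameters inside a family that has not been shown to dominate either baseline. If you want a rigorous version of the proposition, the honest route is to follow the paper's structural lemma (Prop.~\ref{prop:lprojection-pfc}) and cast ``more expressive'' as an approximation-power or empirical statement about $\boldsymbol{\omega}$ on the constrained domain, rather than as function-class nesting.
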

To verify Prop.~\ref{prop:more-expressive-linear}, we seek the connection of our approach to the formulation of PFC layer \cite{shimizu2020hyperbolic}, which is expressed as
\begin{align}
    &\mathbf{x}'\gets\mathcal{F}^\kappa_{\mathrm{PFC}}(\mathbf{x} ; \mathbf{W}, \mathbf{b}):= \frac{\boldsymbol{\omega}}{1+\sqrt{|\kappa|\|\boldsymbol{\omega}\|^2 + 1}} , \label{eq:pfc-formulation}\\ 
    &\text{where } \boldsymbol{\omega}:=\left(\frac{1}{\sqrt{|\kappa|}} \sinh \left(\sqrt{|\kappa|} \nu_j^\kappa(\mathbf{x}^{})\right)\right)_{j=1}^{d_2}.\label{eq:pfc-formulation-omega}
\end{align}
where $\nu_i^\kappa(\mathbf{x})$ is the unidirectional re-generalization of hyperbolic multinomial logistic regression. 
\begin{proposition}
    Given $\mathbf{x}\in \mathbb{D}_\kappa^{d_1}$ and $\mathbf{x}' = \mathcal{F}^\kappa_{\mathrm{PFC}}(\mathbf{x}) \in \mathbb{D}_\kappa^{d_2}$, the corresponding point of $\mathbf{x}'$ on the hyperboloid via stereographic projection is $\mathbf{h}' = \begin{bmatrix}
h_t \\ \mathbf{h}_s
\end{bmatrix}\in\mathbb{L}^{d_2 + 1}_\kappa$ where $\mathbf{h}_s = \boldsymbol{\omega}$ and $h_t = \sqrt{\|\boldsymbol{\omega}\|^2 - \frac{1}{\kappa}}$. 
\label{prop:lprojection-pfc}
\end{proposition}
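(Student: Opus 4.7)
The plan is to directly apply the stereographic projection $\operatorname{P}_{\mathbb{D}\to\mathbb{L}}$ from Eq.~(\ref{eq:D-to-L}) to the PFC output $\mathbf{x}' = \boldsymbol{\omega}/(1+\sqrt{|\kappa|\|\boldsymbol{\omega}\|^2 + 1})$ and to simplify each of the resulting Lorentz coordinates until the claimed form $(h_t,\mathbf{h}_s) = (\sqrt{\|\boldsymbol{\omega}\|^2 - 1/\kappa},\boldsymbol{\omega})$ falls out. This is a purely computational verification rather than a conceptual argument, so the strategy is to locate the one algebraic identity that makes everything collapse and then apply it twice.

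First I would introduce the shorthand $\alpha := 1 + \sqrt{|\kappa|\|\boldsymbol{\omega}\|^2 + 1}$, so that $\mathbf{x}' = \boldsymbol{\omega}/\alpha$ and $\|\mathbf{x}'\|^2 = \|\boldsymbol{\omega}\|^2/\alpha^2$. Expanding $\alpha^2 = 2 + 2\sqrt{|\kappa|\|\boldsymbol{\omega}\|^2 + 1} + |\kappa|\|\boldsymbol{\omega}\|^2$ and using $\kappa = -|\kappa|$ yields the key collapse
\begin{equation*}
\alpha^2 + \kappa\|\boldsymbol{\omega}\|^2 \;=\; 2 + 2\sqrt{|\kappa|\|\boldsymbol{\omega}\|^2 + 1} \;=\; 2\alpha,
\end{equation*}
from which $1 + \kappa\|\mathbf{x}'\|^2 = (\alpha^2 + \kappa\|\boldsymbol{\omega}\|^2)/\alpha^2 = 2/\alpha$ follows immediately.

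With this identity in hand, the space component is one line: $\mathbf{h}_s = 2\mathbf{x}'/(1+\kappa\|\mathbf{x}'\|^2) = 2(\boldsymbol{\omega}/\alpha)/(2/\alpha) = \boldsymbol{\omega}$, as required. For the time component, the cleanest route is to invoke the Lorentz constraint: since $\operatorname{P}_{\mathbb{D}\to\mathbb{L}}$ lands on $\mathbb{L}^{d_2}_\kappa$, the output automatically satisfies $\innerproductcomma{\mathbf{h}'}{\mathbf{h}'}_\mathcal{L} = 1/\kappa$, i.e. $h_t^2 = \|\mathbf{h}_s\|^2 - 1/\kappa = \|\boldsymbol{\omega}\|^2 - 1/\kappa$, and we pick the positive root because $\operatorname{P}_{\mathbb{D}\to\mathbb{L}}$ maps into the upper sheet. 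A fully self-contained check is also available: expanding $1 - \kappa\|\mathbf{x}'\|^2 = (\alpha^2 + |\kappa|\|\boldsymbol{\omega}\|^2)/\alpha^2$ and invoking the twin identity $\alpha(\alpha - 2) = |\kappa|\|\boldsymbol{\omega}\|^2$ (a direct consequence of the collapse above, since $\alpha - 2 = \sqrt{|\kappa|\|\boldsymbol{\omega}\|^2+1} - 1$) recovers $\sqrt{\|\boldsymbol{\omega}\|^2 - 1/\kappa}$ after dividing by $\sqrt{|\kappa|}(1+\kappa\|\mathbf{x}'\|^2) = 2\sqrt{|\kappa|}/\alpha$.

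The main obstacle will be bookkeeping of signs, since $\kappa<0$ and $|\kappa|=-\kappa$ appear interchangeably throughout, and it is easy to drop a sign when moving between $1+\kappa\|\mathbf{x}'\|^2$ and $1-\kappa\|\mathbf{x}'\|^2$. Once the collapse identity $\alpha^2 + \kappa\|\boldsymbol{\omega}\|^2 = 2\alpha$ is in place, both coordinates of $\mathbf{h}'$ follow in a few lines and the proposition is established.
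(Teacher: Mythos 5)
Your proof is correct and follows the same basic route as the paper: apply $\operatorname{P}_{\mathbb{D}\to\mathbb{L}}$ from Eq.~(\ref{eq:D-to-L}) to the PFC output and simplify. What differs is the organization. The paper expands $\gamma^2 + \kappa\|\boldsymbol{\omega}\|^2$ in full, line by line, and watches the $\pm\kappa\|\boldsymbol{\omega}\|^2$ terms cancel numerically; you first isolate the identity $\alpha^2 + \kappa\|\boldsymbol{\omega}\|^2 = 2\alpha$ (equivalently $1+\kappa\|\mathbf{x}'\|^2 = 2/\alpha$) and then apply it twice, which is a cleaner bookkeeping of the same cancellation. For the time coordinate the paper writes only ``similarly'' and asserts the result, whereas you give a genuine shortcut: once $\mathbf{h}_s = \boldsymbol{\omega}$ is known, the Lorentz constraint $\langle\mathbf{h}',\mathbf{h}'\rangle_{\mathcal{L}} = 1/\kappa$ forces $h_t = \sqrt{\|\boldsymbol{\omega}\|^2 - 1/\kappa}$ (choosing the upper sheet) with no further algebra. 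That observation is conceptually nicer than the paper's implicit parallel computation and also pre-empts sign errors, since it bypasses the $1 - \kappa\|\mathbf{x}'\|^2$ numerator entirely. Your alternative fully-algebraic check of $h_t$ is also correct. One small typographical note: the paper's displayed derivation of the time coordinate mislabels it $h_s$; the quantity computed there is of course $h_t$, which is what you compute.
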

\begin{proof}
We start with $\mathbf{h} \gets \mathcal{F}^\kappa_\mathrm{PFC}(\mathbf{x})$ (where $\mathbf{x}\in\mathbb{D}^{d_1}_\kappa$ and $\mathbf{h}\in\mathbb{D}^{d_2}_\kappa$ are respectively the input and output within the Poincar\'e ball) in Eq.~(\ref{eq:pfc-formulation}), applying stereographic projection in Eq.~(\ref{eq:D-to-L}) we obtain the corresponding point on the hyperboloid $\mathbf{h}^{\mathbb{L}} = \begin{bmatrix}
h_t \\ \mathbf{h}_s
\end{bmatrix}\in\mathbb{L}^{d_2 + 1}_\kappa$, where
\begin{align}
    \mathbf{h}_s &= 2\frac{\mathbf{h}}{1+\kappa\|\mathbf{h}\|^2}\\
    &= \frac{\frac{2\boldsymbol{\omega}}{\gamma}}{1+\kappa\|\frac{\boldsymbol{\omega}}{\gamma}\|^2} \quad \text{where $\gamma=1+\sqrt{1-\kappa\|\boldsymbol{\omega}\|^2}$}\\
    &= \frac{2\boldsymbol{\omega}\gamma}{\gamma^2 + \kappa\|\boldsymbol{\omega}\|^2}\\
    &= \frac{\boldsymbol{\omega}(2 + 2\sqrt{1-\kappa\|\boldsymbol{\omega}\|^2})}{1 + 2\sqrt{1-\kappa\|\boldsymbol{\omega}\|^2} + 1 - \kappa\|\boldsymbol{\omega}\|^2 + \kappa\|\boldsymbol{\omega}\|^2}\\
    &= \frac{\boldsymbol{\omega}(2 + 2\sqrt{1-\kappa\|\boldsymbol{\omega}\|^2})}{2 + 2\sqrt{1-\kappa\|\boldsymbol{\omega}\|^2}} = \boldsymbol{\omega},
\end{align}
and similarly
\begin{align}
    h_s = \frac{1 - \kappa \|\mathbf{h}\|^2 }{\sqrt{|\kappa|} + \kappa\sqrt{|\kappa|}\|\mathbf{h}\|^2} = \sqrt{\|\boldsymbol{\omega}\|^2 - \frac{1}{\kappa}}.
\end{align}
This concludes the proof.
\end{proof}

According to Prop.~\ref{prop:lprojection-pfc}, geometrically, the $\boldsymbol{\omega}$ in Eq.~(\ref{eq:pfc-formulation}) can be interpreted as a special feature transformation of the spatial component $\mathbf{h}_s$. The time component $h_t$ is a re-normalization according to $\mathbf{h}_s$ which stabilizes the point on the corresponding hyperboloid. The formulation in Eq.~(\ref{eq:pfc-formulation}) and, identically, Eq.~(\ref{eq:poincare-fc-layer}) ensures that \textbf{any} definition of $\boldsymbol{\omega}$ will \textbf{not} violate the Lorentzian constraint.

In essence, PFC breaks down to three core stages: \textit{1) Project Poincar\'e ball points to their hyperboloid equivalents via Eq.~(\ref{eq:D-to-L})}; \textit{2) Apply transformation $\boldsymbol{\omega}$ to the spatial component and \textbf{re-normalize} the time segment};  \textit{3) Reposition back to the Poincar\'e ball via Eq.~(\ref{eq:L-to-D})}. We hold the view that step-2, thanks to the re-normalization trick, with arbitrary $\boldsymbol{\omega}$, the transformed points will still adhere to the point set constraint (\textit{i.e.} $\{\mathbf{x} \in \mathbb{R}^n : \|\mathbf{x}\| < -\frac{1}{\kappa}\}$) after stereographic projection. Consequently, $\boldsymbol{\omega}$ can be an arbitrary linear transformation. Therefore, we argue to let $\boldsymbol{\omega} = \mathrm{MLP}(\mathbf{x})$ be a Euclidean neural net instead of Eq.~(\ref{eq:pfc-formulation-omega}), leading us to the expression in Eq.~(\ref{eq:poincare-fc-layer-2}).

\begin{figure}[t]
    \centering
    \includegraphics[width=0.99\linewidth]{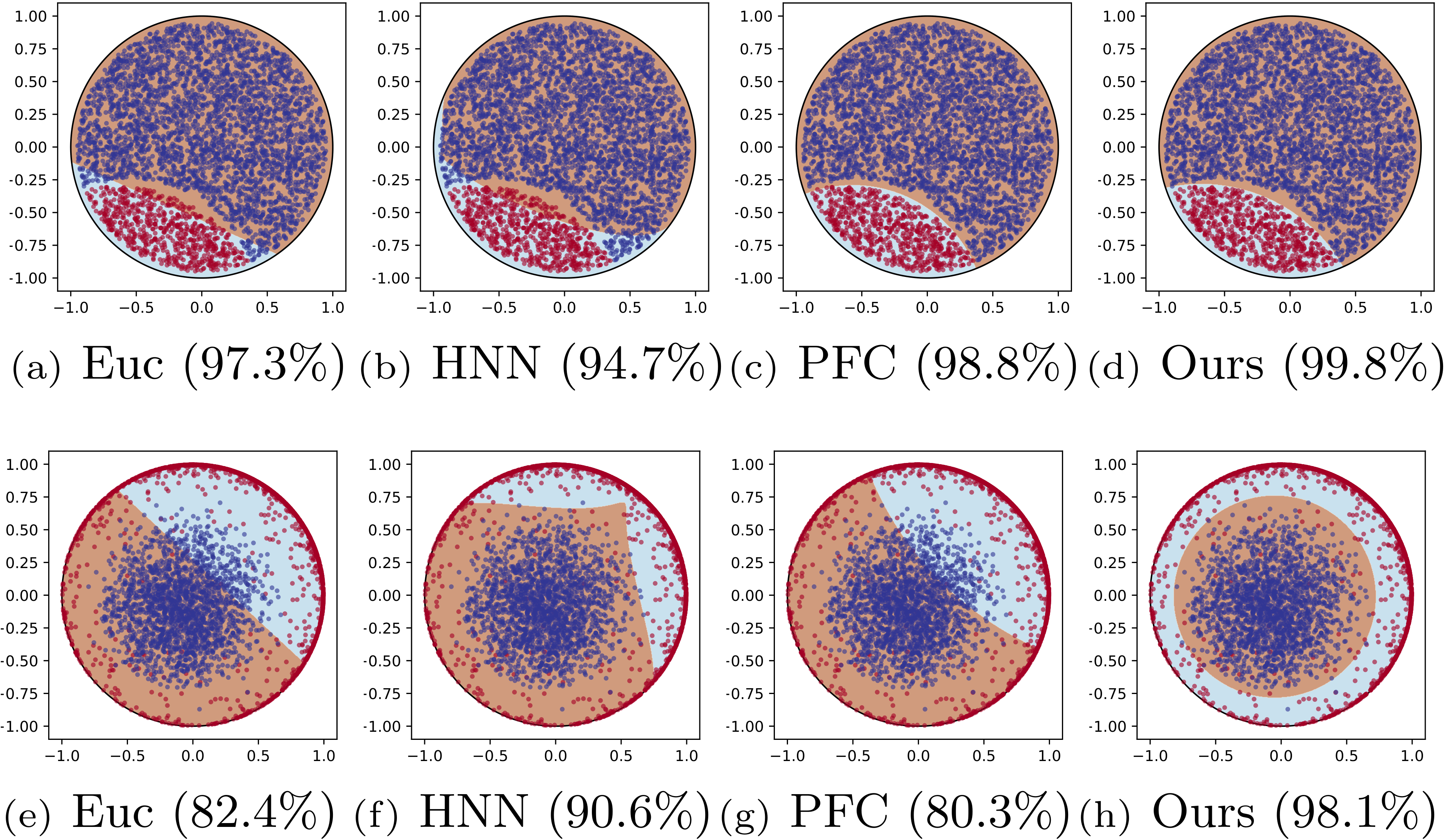}
  \label{Fig.cmp-binary-cls}
  \caption{Decision hyperplane of various feature transformations on synthetic binary classification tasks (Task \#1: (a)-(d)) and Task \#2: (e)-(h)).}
\end{figure}
\textbf{Performance Evaluation.} In Fig.~\ref{Fig.cmp-binary-cls}, we illustrate the decision hyperplane of different single transformation layers with randomly sampled two batches of points in $\mathbb{D}^2_\kappa$. We observe that the Eucliean hyperplane fail to adapt to non-linearity of the data samples due to the linear nature. The two hyperbolic baselines, limited by strict hyperbolic constraints on their formulations, also fails in fitting the data. Such a phenomenon is especially obvious on Fig.~\ref{Fig.cmp-binary-cls}.(b, f and g). In comparison, our reformulated FC layer offers the best performance on fitting the data samples.

\begin{table}[h]
\caption{Comparison of accuracy and calculation time (ms) of various hyperbolic FC layers. 2k+2k points are sampled within high dimensional Poincar\'e ball. We report mean $\pm$ std.}
\resizebox{\linewidth}{!}{\begin{tabular}{@{}ccc|cc|cc|cc@{}}
\toprule
\multirow{2}{*}{Task}     & \multicolumn{2}{c|}{Euclidean} & \multicolumn{2}{c|}{HNN} & \multicolumn{2}{c|}{PFC} & \multicolumn{2}{c}{\textbf{Ours}} \\ \cmidrule(l){2-9} 
                          & Acc       & Time       & Acc         & Time          & Acc         & Time       & Acc         & Time       \\ \midrule
\# 1 & $\text{97.3}_{\pm \text{3.2e-04}}$ & $\text{81.8}_{\pm \text{4.1e+01}}$ & $\text{95.6}_{\pm \text{5.8e-04}}$ & $\text{465.6}_{\pm \text{3.1e+02}}$ & $\text{99.8}_{\pm \text{1.7e-02}}$ & $\text{291.3}_{\pm \text{1.8e+01}}$ & $\text{100.0}_{\pm \text{2.6e-04}}$ & $\text{197.6}_{\pm \text{4.9e+00}}$ \\
\# 2 & $\text{82.6}_{\pm \text{4.3e-03}}$ & $\text{77.7}_{\pm \text{1.2e+00}}$ & $\text{90.6}_{\pm \text{2.2e-16}}$ & $\text{447.7}_{\pm \text{6.4e+01}}$ & $\text{94.8}_{\pm \text{5.7e-02}}$ & $\text{290.0}_{\pm \text{1.5e+01}}$ & $\text{98.2}_{\pm \text{5.3e-04}}$ & $\text{198.7}_{\pm \text{6.0e+00}}$ \\ \bottomrule
\end{tabular}

\label{tb:cmp-cls}
\end{table}

\reone{
\textbf{Computation Cost Analysis.} Assume the feature $\mathbf{h}$ is in $d_1$-dimension, and the feature transformation matrix $\mathbf{W}\in \mathbb{R}^{d_2 \times d_1}$. Theoretically, the complexity of HNN and our method are both $\mathcal{O}(d_1 \times d_2 + d_1 + 2d_2)$, which is similar to that of the Euclidean linear transformation $\mathbf{W}\mathbf{h}+\mathbf{b}$ ($\mathcal{O}(d_1 \times d_2 + d_2)$ complexity). Below, we provide detailed complexity analysis, and give the reason why the computation cost of HNN blows up while our method costs similar to standard MLP.

As detailed in Appendix~\ref{app:poincare-ball}, the HNN consists of exponential map, matrix-vector multiplication, and logarithmic map. We consider the simplest situation where the base is the north pole $\mathbf{0}$, consequently, the M\"obius additions $\oplus_\kappa$ are canceled and the conformal factor $\lambda_\mathbf{\mathbf{0}}^\kappa$ has $\mathcal{O}(1)$ complexity. Thus $\log_\mathbf{0}^\kappa (\mathbf{h})$ and $\exp_\mathbf{0}^\kappa (\mathbf{h})$ as in Eq.~(\ref{eq:logmap}-\ref{eq:expmap}) both have $\mathcal{O}(d)$ complexity, meaning the transformation $\exp_\mathbf{0}^\kappa\mathbf{W}\log_\mathbf{0}^\kappa (\mathbf{h}))$ is of $\mathcal{O}(d_1\times d_2 + d_1 + d_2)$ complexity. However, when considering bias translation, the $\oplus_\kappa$ must be accounted. As shown in Eq.~(\ref{eq:mobius_add}), $\oplus_\kappa$ basically consists inner products, norms and vector addition, all of them have $\mathcal{O}(d)$ complexity. The division has $\mathcal{O}(1)$ complexity, so overall $\oplus_\kappa$ is of $\mathcal{O}(4d)$ complexity. Therefore, the HNN $\exp_\mathbf{0}^\kappa(\mathbf{W}\log_\mathbf{0}^\kappa (\mathbf{h})) \oplus_\kappa \mathbf{b}$ has $\mathcal{O}(d_1\times d_2 + d_1 + 4d_2)$ complexity. Moreover, as evidenced in \cite{derczynski2020power,choudhary2022towards}, the hyperbolic trigonometric functions substantially increase the GPU burden on parallelism, which result in $4\sim 10$ times slower computation. With the repeat usage of $\mathrm{tanh}$ and $\mathrm{tanh}^{-1}$ in exp/log maps, the practical time consumption can be sufficiently higher than our theoretical analysis. Lastly, we analyze the complexity of our method. Our method belongs to the same family with PFC, hence similar complexities. In particular, the $\phi(\cdot)$ in Eq.~(\ref{eq:poincare-fc-layer-2}) consists of matrix/scalar-vector multiplications, norm and addition, the complexity is $\mathcal{O}(d_1 \times d_2 + d_1 + 2d_2)$. With the result $\phi(\mathbf{h};\mathbf{W},\mathbf{b})$, the complexity of Eq.~(\ref{eq:poincare-fc-layer}) (with only norm and devision) is $\mathcal{O}(d_1 \times d_2 + d_1 + 4d_2)$.

From above we conclude that, all methods can achieve an approximately $\mathcal{O}(d_1 \times d_2)$ complexity. The HNN, however, requires heavy usage of trigonometric funcions, which are computationally costly as evidenced in the literature, thus hindering its scalability. Our approach, instead, requires no hyperbolic trigonometric functions, thus leads to similar cost as Euclidean MLP. In Tab.~\ref{tb:cmp-cls}, we illustrate the performance of transformation layers and observe that our approach achieves the best accuracy on two simple classification tasks. Notably, our approach requires approximately $2$ times of the computation time of Euclidean linear layer, while other approaches require $3$ times and more. This coincides with our analysis.

}

\subsection{Efficient Message Aggregation}
\begin{definition}[M\"obius gyromidpoint \cite{ungar2008gyrovector}]
Given the gyrovectors $\{\mathbf{x}_i\in \mathbb{D}^d_\kappa\}_{i=1}^N$ and the weights $\{w_i\in \mathbb{R}\}_{i=1}^N$, the weighted gyromidpoint in the Poincar\'e ball $f_\mathrm{MG}^{\kappa}: \mathbb{D}_\kappa^{N\times d}\times \mathbb{R}^N \to \mathbb{D}^d_\kappa$ is defined as
\begin{align}
    f_\mathrm{MG}^{\kappa}(\{\mathbf{x}_i\}_{i=1}^N , \{w_i\}_{i=1}^N) 
    =\frac{1}{2} \otimes_\kappa \left( \frac{\sum_{i=1}^N w_i \lambda_{\mathbf{x}_i}^\kappa \mathbf{x}_i}{\sum_{i=1}^N |w_i| (\lambda_{\mathbf{x}_i}^\kappa - 1)} \right). \nonumber
\end{align}
\label{def:gyromidpoint}
\end{definition}
{

\begin{figure}[t]
    \centering
    \includegraphics[width=0.8\linewidth]{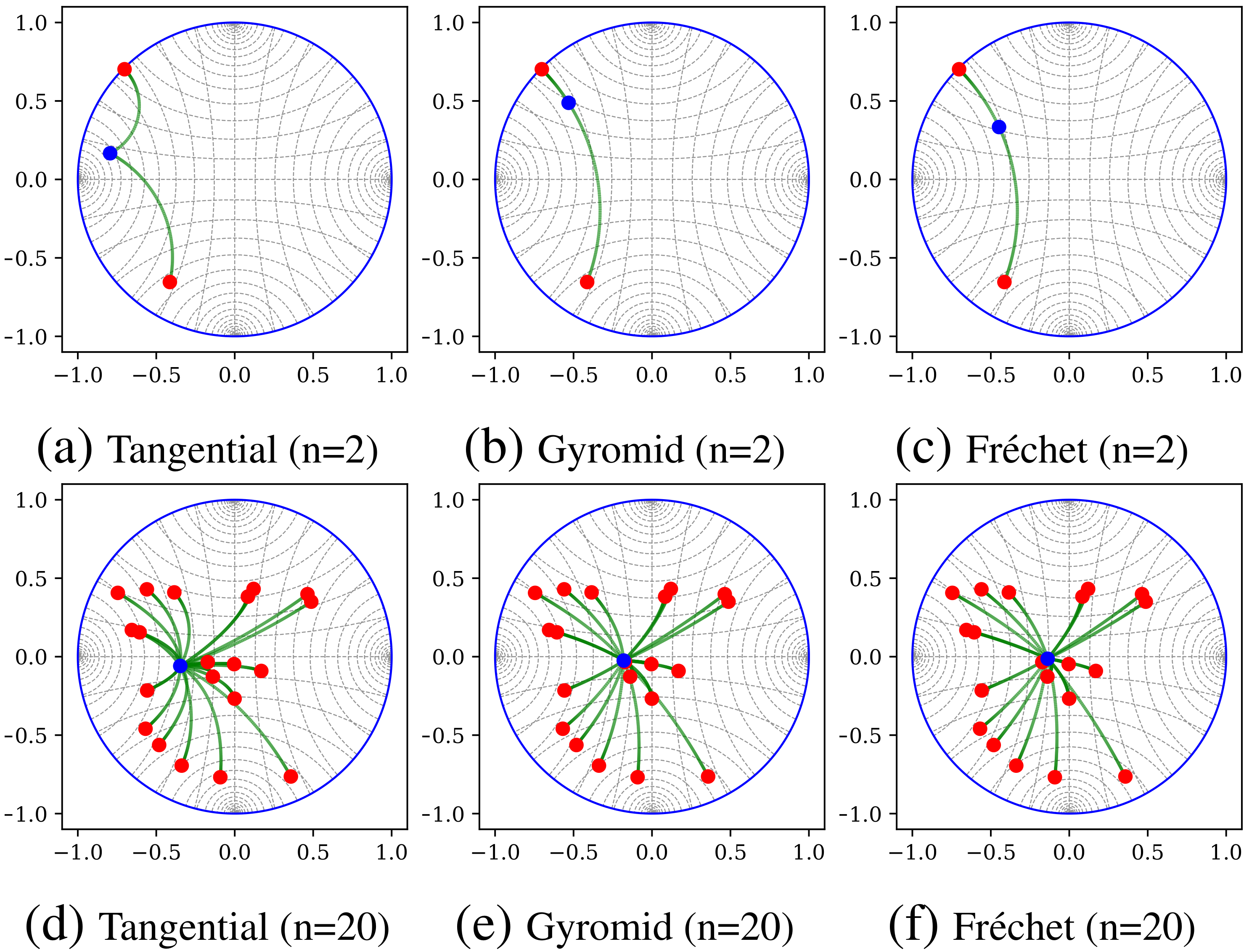}
  \caption{Compare tangential midpoint \cite{chami2019hyperbolic}, M\"obius gyromidpoint \cite{ungar2008gyrovector} and differentiable Fr\'echet mean \cite{Lou2020DifferentiatingTT} in the 2-dimension Poincar\'e disk. For each method we illustrate the weighted midpoint (blue) for double and multiple randomly sampled points (red) with randomly initialized weights.}
  \label{Fig.vismid}
\end{figure}
{
\begin{table}[t]
\caption{Compare various hyperbolic averaging methods on precision and calculation time (ms) vs the baseline (1000-iter Frech\'et mean). 4k points are randomly sampled within high dimensional Poincar\'e ball. We report mean $\pm$ std.}
\resizebox{\linewidth}{!}{\begin{tabular}{@{}cc|cc|cc|cc@{}}
\toprule
\multirow{2}{*}{Dim}    & Baseline                         & \multicolumn{2}{c|}{Tangential}                                     & \multicolumn{2}{c|}{\textbf{M\"obius Gyro}}                                           & \multicolumn{2}{c}{Frech\'et-Early}                                     \\ \cmidrule(l){2-8} 
                        & Time                             & MSE                                 & Time                          & MSE                                   & Time                          & MSE                                   & Time                          \\ \midrule
\multicolumn{1}{c|}{8}  & $\text{152.4}_{\pm \text{0.5}}$  & $\text{4.8e-5}_{\pm \text{4.7e-6}}$ & $\text{3.4}_{\pm \text{0.3}}$ & $\text{1.5e-6}_{\pm \text{1.5e-7}}$   & $\text{2.3}_{\pm \text{0.3}}$ & $\text{6.2e-29}_{\pm \text{2.8e-29}}$ & $\text{2.3}_{\pm \text{0.0}}$ \\
\multicolumn{1}{c|}{16} & $\text{219.4}_{\pm \text{1.1}}$  & $\text{6.5e-5}_{\pm \text{4.7e-6}}$ & $\text{2.8}_{\pm \text{0.3}}$ & $\text{9.1e-7}_{\pm \text{6.8e-8}}$   & $\text{1.7}_{\pm \text{0.2}}$ & $\text{6.9e-30}_{\pm \text{1.9e-30}}$ & $\text{2.8}_{\pm \text{0.1}}$ \\
\multicolumn{1}{c|}{64} & $\text{371.1}_{\pm \text{15.1}}$ & $\text{3.5e-5}_{\pm \text{1.2e-6}}$ & $\text{3.3}_{\pm \text{0.4}}$ & $\text{2.4e-10}_{\pm \text{8.9e-12}}$ & $\text{2.2}_{\pm \text{0.3}}$ & $\text{3.1e-31}_{\pm \text{1.4e-31}}$ & $\text{3.9}_{\pm \text{0.2}}$ \\ \bottomrule
\end{tabular}}
\label{tab:cmp-midpoint-runtime}
\end{table}

\begin{table}[t]
\caption{Peak memory usage comparison (4K point, 64 dim, mean$\pm$std)}
\centering
\resizebox{0.99\linewidth}{!}{
\begin{tabular}{@{}l|llll@{}}
\toprule
Method         & Frct-MaxIter                           & Tangential                           & \textbf{M\"obius Gyro}                     & Frct-Early                          \\ \midrule
Peak Mem (KiB) & $\text{2727.8}_{\pm \text{82.4}}$ & $\text{536.9}_{\pm \text{24.6}}$ & $\textbf{411.1}_{\pm \textbf{85.6}}$ & $\text{2670.4}_{\pm \text{110.8}}$ \\ \bottomrule
\end{tabular}
}
\label{tab.memory_usage}
\end{table}
}

The mean operator is an essential building block for neural networks. In non-Euclidean geometries, the mean computation cannot be performed simply by averaging the inputs, as the averaged vector may be out of manifold. Basically, there are three types of generalized weighted mean to hyperbolic space that can be used to guarantee the summed vectors on the manifold and to be differentiable, namely, the tangential aggregation \cite{chami2019hyperbolic}, hyperbolic gyromidpoint \cite{ungar2008gyrovector} and differentiable Frech\'et mean \cite{Lou2020DifferentiatingTT}. In this work, we employ the hyperbolic gyromidpoint as the unified faster and accurate mean operator, defined in Def.~\ref{def:gyromidpoint}. With Def.~\ref{def:gyromidpoint}, we define the convolution for hyperbolic node feature $\mathbf{h}^{(l)}\in \mathbb{D}^{|\mathcal{V}|\times d}_\kappa$ upon M\"obius gyromidpoint. Given the augmented normalized adjacency matrix $\tilde{\mathbf{P}}\in \mathbb{R}^{|\mathcal{V}|\times |\mathcal{V}|}$, we define our message aggregation as
\begin{equation}
    f_\mathrm{NA}^{\kappa} (\tilde{\mathbf{P}},\mathbf{h}) = \left(\frac{1}{2} \otimes_\kappa \left( \frac{\sum_{j=1}^d \tilde{\mathbf{P}}_i \lambda_{\mathbf{h}_j}^\kappa \mathbf{h}_j}{\sum_{j=1}^d |\tilde{\mathbf{P}}_{i}| (\lambda_{\mathbf{h}_j}^\kappa - 1)} \right)\right)_{i=1}^{|\mathcal{V}|}.
    \label{eq:hyperbolic-neighbour-agg}
\end{equation}
}



\textbf{Performance Evaluation.} We demonstrate the precision and calculation time of three methods in Tab.~\ref{tab:cmp-midpoint-runtime}, regarding the standard Fr\'echet mean \cite{karcher1987riemannian, karcher2014riemannian} as baseline. We observe: (1) the tangential aggregation is efficient but inaccurate (comparing Fig.~\ref{Fig.vismid}.a to \ref{Fig.vismid}.b and \ref{Fig.vismid}.c), where the resulting midpoint is no longer on the geodesic between the inputs; (2) computing the differentiable Fr\'echet mean \cite{Lou2020DifferentiatingTT} requires an iterative solver, and obtaining an accurate result requires considerable computation; (3) M\"obius gyromidpoint gives a close solution to the Fr\'echet mean while significantly reducing the complexity. Furthermore, we provide \textit{memory evaluation} in Tab.~\ref{tab.memory_usage}, where we report peak memory usage comparison on the same synthetic dataset under $1000$ runs. The employed gyromidpoint approach exhibits the lowest memory cost, which highlights its advantage in computation efficiency and scalability, allowing us to further expand our model to a multi-layer scheme.


\section{Toward Deeper Hyperbolic GCN}

\subsection{Over-Smoothness Analysis}
First, we derive the hyperbolic Dirichlet energy $f^{\mathbb{D}}_\mathrm{DE}(\cdot)$ as a measure of smoothness for the Poincar\'e embeddings. 
\begin{definition}
Given the embedding $\mathbf{h} = \{\mathbf{h}^{}_i  \in \mathbb{D}^{d}_\kappa\}_{i=1}^{|\mathcal{V}|}$, the hyperbolic Dirichlet energy $f^{\kappa}_\mathrm{DE}(\mathbf{h})$ is defined as
\begin{align}
\frac{1}{2} \sum_{(i,j)\in \mathcal{E}} {d_{\mathbb{D}}^\kappa} \left( 
\exp^\kappa_\mathbf{o}\left(\frac{\log_\mathbf{o}^\kappa(\mathbf{h}_i)}{\sqrt{1+d_i}}\right)  ,\exp^\kappa_\mathbf{o}\left(\frac{\log_\mathbf{o}^\kappa(\mathbf{h}_j)}{\sqrt{1+d_j}}\right)
\right)^2 \nonumber  \label{eq:dirichlet-energy-definition-hyperbolic},
\end{align}
where $d_{i/j}$ denotes the node degree of node $i/j$. The distance $d_\mathbb{D}^\kappa(\mathbf{x}, \mathbf{y})$ between two points $\mathbf{x}, \mathbf{y}\in \mathbb{D}$ is the geodesic length, we detail the closed form expression in Appendix~A.
\label{def:dirichlet-energy-definition}
\end{definition}

\begin{figure}[t]
\centering
\subfigure[Existing HGCN schematic.]{
\label{Fig.architecture.1}
\includegraphics[width=0.85\linewidth]{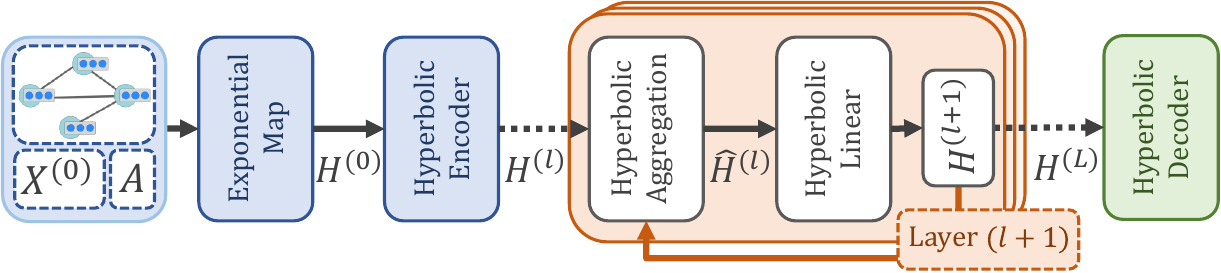}}
\subfigure[The DeepHGCN schematic.]{
\label{Fig.architecture.2}
\includegraphics[width=0.99\linewidth]{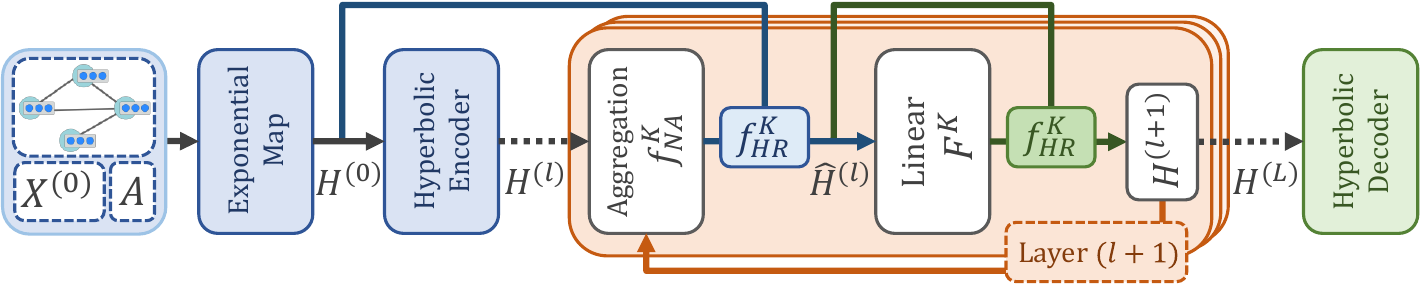}}
\caption{Comparison between the existing HGCN architecture and the proposed DeepHGCN. At the $l$-th layer, (a) performs linear transformation directly after the aggregation and regards the transformed feature as next layer's input, causing over-smoothing as $l$ increases; (b) performs hyperbolic residual connection after aggregation and linear layer to alleviate over-smoothing, such that the hyperbolic residual operator retains the feature on the manifold and the global hyperbolic geometry is preserved.}
\label{Fig.architecture-overview}
\end{figure}

\begin{proposition}
Hyperbolic message aggregation reduces the Dirichlet energy. \textit{i.e.}
    $f_\mathrm{DE}^\kappa(\tilde{\mathbf{P}}\otimes_\kappa\mathbf{h}^{(l)})\le f_\mathrm{DE}^\kappa (\mathbf{h}^{(l)})$.
\label{lemma:shrinking-property}
\end{proposition}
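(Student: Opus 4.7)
The plan is to reduce the hyperbolic Dirichlet energy contraction to its familiar Euclidean analogue by transporting the problem to the tangent space at the origin, where the M\"obius matrix--vector multiplication $\tilde{\mathbf{P}}\otimes_\kappa(\cdot)$ acts linearly. First I would unfold $\tilde{\mathbf{P}}\otimes_\kappa\mathbf{h}$ via its tangent-space definition, writing $(\tilde{\mathbf{P}}\otimes_\kappa\mathbf{h})_i = \exp_\mathbf{o}^\kappa(\sum_j \tilde{\mathbf{P}}_{ij}\log_\mathbf{o}^\kappa(\mathbf{h}_j))$, and set $\mathbf{v}_i = \log_\mathbf{o}^\kappa(\mathbf{h}_i)$ so that aggregation becomes the ordinary linear propagation $\mathbf{v}\mapsto\tilde{\mathbf{P}}\mathbf{v}$ in $T_\mathbf{o}\mathbb{D}^n_\kappa$.

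Next I would rewrite $f^\kappa_\mathrm{DE}(\mathbf{h})$ in terms of the normalized tangent-space vectors $\mathbf{u}_i = \mathbf{v}_i/\sqrt{1+d_i}$, so that each edge term becomes $d_\mathbb{D}^\kappa(\exp_\mathbf{o}^\kappa(\mathbf{u}_i),\exp_\mathbf{o}^\kappa(\mathbf{u}_j))^2$, and expand this with the closed-form Poincar\'e distance, which is a monotone function of $\|\mathbf{u}_i-\mathbf{u}_j\|$ and of the norms $\|\mathbf{u}_i\|,\|\mathbf{u}_j\|$. The classical Euclidean Dirichlet energy contraction then applies directly to $\mathbf{u}$: since the symmetric normalized augmented adjacency $\tilde{\mathbf{P}}$ has spectral radius at most $1$, the quadratic form against the symmetric normalized Laplacian yields $\sum_{(i,j)\in\mathcal{E}}\|\mathbf{u}'_i-\mathbf{u}'_j\|^2\le\sum_{(i,j)\in\mathcal{E}}\|\mathbf{u}_i-\mathbf{u}_j\|^2$, together with a concomitant shrinkage of the individual norms $\|\mathbf{u}'_i\|$ forced by the averaging behavior of $\tilde{\mathbf{P}}$.

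The hard part will be transferring this tangent-space contraction back to the intrinsic hyperbolic distance, because in negative curvature $\exp_\mathbf{o}^\kappa$ is distance-nondecreasing (Cartan--Hadamard / Rauch comparison), so naively $\|\mathbf{u}'_i-\mathbf{u}'_j\|\le\|\mathbf{u}_i-\mathbf{u}_j\|$ does not automatically imply $d_\mathbb{D}^\kappa(\exp_\mathbf{o}^\kappa(\mathbf{u}'_i),\exp_\mathbf{o}^\kappa(\mathbf{u}'_j))\le d_\mathbb{D}^\kappa(\exp_\mathbf{o}^\kappa(\mathbf{u}_i),\exp_\mathbf{o}^\kappa(\mathbf{u}_j))$. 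To resolve this I would exploit joint monotonicity of the argument of $\cosh^{-1}$ inside the Poincar\'e distance formula: it increases in the pairwise displacement and, critically, also in the base norms, so the simultaneous shrinkage of displacements and norms under $\tilde{\mathbf{P}}$ forces both numerator and denominator of that argument to move in the favorable direction. As an alternative, more intrinsic route, I would appeal to the CAT$(0)$ structure of $\mathbb{D}^n_\kappa$ and invoke a Sturm-type contraction $d(\mathrm{avg}(X),\mathrm{avg}(Y))\le\sum_i w_i\,d(x_i,y_i)$ for weighted geodesic averages, which after squaring, summing over edges and applying Cauchy--Schwarz yields the edge-wise squared-distance inequality directly, bypassing the tangent-space comparison altogether.
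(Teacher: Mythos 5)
Your approach departs from the paper's substantially, and you correctly identify the central obstacle that the paper sidesteps: the exponential map at the origin is distance-\emph{expanding} in negative curvature, so the classical Euclidean Dirichlet-energy contraction in $T_\mathbf{o}\mathbb{D}^n_\kappa$ does not automatically descend to the intrinsic distance $d_\mathbb{D}^\kappa$. The paper instead works entirely inside the Poincar\'e ball: it substitutes the closed form $d_\mathbb{D}^\kappa(\mathbf{x},\mathbf{y}) = \frac{2}{\sqrt{|\kappa|}}\tanh^{-1}(\sqrt{|\kappa|}\|-\mathbf{x}\oplus_\kappa\mathbf{y}\|)$, uses the semigroup property $\mathbf{A}\otimes_\kappa(\mathbf{B}\otimes_\kappa\mathbf{x})=(\mathbf{AB})\otimes_\kappa\mathbf{x}$ to absorb the degree normalization into $\tilde{\mathbf{P}}$, pulls ordinary left-multiplication by $\tilde{\mathbf{P}}$ out of the M\"obius addition $\oplus_\kappa$, bounds $\|\tilde{\mathbf{P}}\mathbf{x}\|$ by a matrix norm of $\tilde{\mathbf{P}}$ times $\|\mathbf{x}\|$, and finishes with monotonicity of $\tanh^{-1}$. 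That route avoids the tangent/manifold transfer you flag, at the cost of an algebraic step (distributing a Euclidean matrix through $\oplus_\kappa$) that is itself delicate.

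However, both of your proposed resolutions contain gaps as stated. The ``joint monotonicity'' route needs an \emph{edge-by-edge} inequality, but the Euclidean Dirichlet contraction under $\mathbf{u}\mapsto\tilde{\mathbf{P}}\mathbf{u}$ only yields the \emph{sum} inequality $\sum_{(i,j)\in\mathcal{E}}\|\mathbf{u}'_i-\mathbf{u}'_j\|^2\le\sum_{(i,j)\in\mathcal{E}}\|\mathbf{u}_i-\mathbf{u}_j\|^2$; an individual $\|\mathbf{u}'_i-\mathbf{u}'_j\|$ may grow. Likewise the ``concomitant shrinkage of individual norms'' fails pointwise: if $\mathbf{u}_i=0$ but a neighbor has $\mathbf{u}_j\ne 0$, aggregation gives $\|\mathbf{u}'_i\|>0=\|\mathbf{u}_i\|$. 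So even granting joint monotonicity of $d_\mathbb{D}^\kappa(\exp_\mathbf{o}^\kappa(\cdot),\exp_\mathbf{o}^\kappa(\cdot))$ in $(\|\mathbf{u}\|,\|\mathbf{v}\|,\|\mathbf{u}-\mathbf{v}\|)$, the premise that all three quantities shrink for every edge is false, and you provide no Jensen-type convexity argument to carry a sum inequality through the nonlinear distance wrapper. Your second, CAT$(0)$/Sturm route is conceptually cleaner but targets the wrong operator: Sturm's barycenter contraction applies to weighted Fr\'echet means, whereas the proposition uses the tangent-space aggregation $\tilde{\mathbf{P}}\otimes_\kappa\mathbf{h}=\exp_\mathbf{o}^\kappa(\tilde{\mathbf{P}}\log_\mathbf{o}^\kappa(\mathbf{h}))$, which is not the Fr\'echet mean of the neighbor embeddings. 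Making this route rigorous would require either a Sturm-type contraction for tangential averaging or a change of aggregation operator in the claim; neither is supplied.
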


\noindent The Dirichlet energy of node representation in each layer can be viewed as the weighted sum of the distances between normalized node pairs. Prop.~\ref{lemma:shrinking-property} indicates that the energy of node representation will decay after the aggregation step. When multiple aggregations in HGCN are performed, the energy will converge to zero, indicating lower embedding expressiveness, which may lead to oversmoothing.

\subsection{Hyperbolic Graph Residual Connection}
We define the hyperbolic graph residual connection $f_\mathrm{HR}^\kappa$ upon the notion of M\"obius gyromidpoint. Given the node embedding matrices $\mathbf{h}^1, \mathbf{h}^2 \in \mathbb{D}^{|\mathcal{V}|\times d}_K$ and residual weight coefficients $w^1, w^2\in \mathbb{R}$, define
\begin{equation}
\begin{aligned}
    &f_\mathrm{HR}^{\kappa} (\mathbf{h}^1, \mathbf{h}^2; w^1, w^2)\\
    &= \left(\frac{1}{2}\otimes_\kappa \left( \frac{w^1\lambda_{\mathbf{h}^1_i}^\kappa\mathbf{h}^1_i + w^2\lambda_{\mathbf{h}^2_i}^\kappa\mathbf{h}^2_i}{|w^1|(\lambda_{\mathbf{h}^1_i}^\kappa-1) + |w^2|(\lambda_{\mathbf{h}^2_i}^\kappa-1)} \right)\right)_{i=1}^{|\mathcal{V}|}.
\end{aligned}
\label{eq:hyperbolic-residual}
\end{equation}
This operation ensures the feature after residual connection still lies in the Poincar\'e ball. One also recovers the arithmetic mean as $\kappa\to 0$. 

\subsubsection{Hyperbolic Initial Residual}
The graph residual connection is firstly introduced in the standard GCN \cite{kipf2017semisupervised}, in which the current layer representation $\sigma(\tilde{\mathbf{P}}\mathbf{h}^{(l)}\mathbf{W}^{(l)})$ is connected to previous layer $\mathbf{h}^{(l)}$ to facilitate a deeper model. Some works \cite{klicpera2018predict, chen2020simple} empirically find the efficacy of adding residual connections to initial layer $\mathbf{h}^{(0)}$. It was also claimed in \cite{zhou2021dirichlet} that residual connections to both initial layer $\mathbf{h}^{(0)}$ and previous layer $\mathbf{h}^{(l)}$ can prevent the Dirichlet energy being below the lower energy limit that causes over-smoothing. Based on these studies, we formulate the \textit{hyperbolic residual layer} by
\begin{equation}
    \hat{\mathbf{h}}^{(l)} = f_\mathrm{HR}^{\kappa}\left( \mathbf{h}^{(t)} ,  f_\mathrm{NA}^{\kappa}(\tilde{\mathbf{P}}, \mathbf{h}^{(l)}); \alpha_l, 1 - \alpha_l  \right),
\label{eq:hyperbolic-initial-residual}
\end{equation}
where $t$ in this case can be $0$ (initial layer) or $l-1$ (previous layer). Empirically, we find that adding previous residual does not contribute to the overall performance than adding only initial residual. Therefore, we set $t=0$ in all case of our study. $f_\mathrm{HR}$ and $f_\mathrm{NA}$ are respectively defined in Eq. (\ref{eq:hyperbolic-residual}) and (\ref{eq:hyperbolic-neighbour-agg}). The hyperparameter $\alpha_l$ for the $l$-th layer indicates the proportion of residual representation that the current layer retains. In practice, this value could be relatively small so it does not conceal the variation of embedding whilst alleviating over-smoothing.

\subsubsection{Hyperbolic Weight Alignment}

It is shown in \cite{kipf2017semisupervised, chen2020simple}, however, that simply applying residual operation to initial/previous layer only partially relieves the over-smoothing issue and still degrades the performance when more layers are stacked. To fix such deficiency, \cite{chen2020simple} borrows the idea from ResNet \cite{he2016deep} to align the weight matrix $\mathbf{W}^{(l)}$ in each layer to an identity matrix $\mathbf{I}$, which can be formulated as
\begin{align}
    {\mathbf{h}}^{(l+1)} & = \hat{\mathbf{h}}^{(l)} \left(\beta_l \mathbf{W}^{(l)} + (1-\beta_l)\mathbf{I}_{|\mathcal{V}|}\right)\label{eq:identity-mapping}\\
    & = \beta_l (\hat{\mathbf{h}}^{(l)} \mathbf{W}^{(l)}) + (1-\beta_l)\hat{\mathbf{h}}^{(l)} \label{eq:identity-map-expand},
\end{align}
where $\hat{\mathbf{h}}^{(l)}\in \mathbb{D}^{|\mathcal{V}|\times d}_K$ is the aggregated feature. If $\beta_l$ is sufficiently small, the model ignores the weight matrix and simulates the behaviour of APPNP \cite{gasteiger2018predict}. Further, it forces a small $\|\mathbf{W}\|$, which implies a small $s^L$ ($s$ is the maximum singular value of $\mathbf{W}^{(L)}$). According to \cite{oono2019graph}, the loss of information in the $L$-layer GCN is relieved. 

To generalize Eq.~(\ref{eq:identity-mapping}-\ref{eq:identity-map-expand}) to hyperbolic setting, one can leverage the power of M\"obius gyromidpoint as Def.~\ref{def:gyromidpoint}. Following the expanded Eq.~(\ref{eq:identity-map-expand}), whose form is the weighted average between $\hat{\mathbf{h}}^{(l)}$ and $\hat{\mathbf{h}}^{(l)}\mathbf{W}^{(l)}$, thus midpoint can be employed to calculate the weighted mean between hyperbolic representation $\hat{\mathbf{h}}^{(l)}$ and the hyperbolic transformed representation $\mathbf{W}\otimes_K\hat{\mathbf{h}}^{(l)}$. Define the reformulation
\begin{equation}
    \mathbf{h}^{\star (l+1)} = f_\mathrm{HR}^{\kappa}\left(\mathbf{W}^{(l)}\otimes_K\hat{\mathbf{h}}^{(l)}, \hat{\mathbf{h}}^{(l)}; \beta_l, 1 - \beta_l\right),
\label{eq:identity-map-gyromid}
\end{equation}
where $f_\mathrm{HR}^{\kappa}$ is defined in Eq. (\ref{eq:hyperbolic-residual}). 
Finally, we augment vanilla M\"obius transformation $\mathbf{W}^{}\otimes_\kappa\hat{\mathbf{h}}^{}$ by our proposed $\mathcal{F}^K$ for the Poincar\'e ball-based model, which yields the formulation of proposed \textit{hyperbolic weight alignment}
\begin{equation}
    \mathbf{h}^{(l+1)} = f_\mathrm{HR}^{\kappa}\left(\mathcal{F}^\kappa(\hat{\mathbf{h}}^{(l)} ; \mathbf{W}^{(l)}), \hat{\mathbf{h}}^{(l)}; \beta_l, 1 - \beta_l\right).
    \label{eq:hyperbolic-identity-map}
\end{equation}

\subsubsection{Hyperbolic Feature Regularization}
Notably, \cite{khrulkov2020hyperbolic} empirically finds that the learnt embeddings near the boundary of the hyperbolic ball are easier to classify. To push the nodes to the boundary, \cite{yang2022hrcf} proposed a regularization that first identifies a root node in hyperbolic space and then encourages the embeddings to move away from the root. We employ a similar approach to regularize the training of DeepHGCN. Firstly, the root node can be identified via gyromidpoint in Def.~\ref{def:gyromidpoint} as
\begin{equation}
    \mathbf{h}_\mathrm{root}^{(l)} = \frac{1}{2} \otimes_\kappa \left( \frac{\sum_{i=1}^{|V|} \lambda_{\mathbf{h}_i^{(l)}}^\kappa \mathbf{h}_i^{(l)}}{\sum_{i=1}^{|V|} (\lambda_{\mathbf{h}_i^{(l)}}^\kappa - 1)} \right),
    \label{eq:feature-reg-begin}
\end{equation}
where $\mathbf{h}_i$ is the row-vector of embedding matrix $\mathbf{H}^{(l)}$. Each node is aligned with the root by $\mathbf{\bar{h}}^{(l)}_i = \mathbf{h}^{(l)}_i \ominus_\kappa \mathbf{h}_\mathrm{root}^{(l)}$. By the definition of the Poincar\'e ball we have $\|\mathbf{\bar{h}}^{(l)}_i\| < \frac{1}{|\kappa|}$. Pushing the aligned embeddings to the boundary deduces the norm of aligned embeddings $\|\mathbf{\bar{h}}^{(l)}_i\| \to \frac{1}{|\kappa|}$, which is equivalent to minimizing the inverse of the norm. The regularization term can therefore be defined as the inverse quadratic mean of the norms, formulated as
\begin{equation}
    \mathcal{L}_\mathrm{reg}^{(l)} = \frac{1}{\sqrt{\frac{1}{|V|} \sum_{i=1}^{|V|} \| \mathbf{\bar{h}}^{(l)}_i \|^2 }}.
    \label{eq:feature-reg-end}
\end{equation}
In practice, computing $\mathcal{L}_\mathrm{reg}^{(l)}$ for a $L$-layer deep network is costly. Since the embedding is densely connected through each layer in DeepHGCN, we can consider only the last layer regularization $\mathcal{L}_\mathrm{reg}^{(L)}$. Hence, the optimization target is formulated as $\mathcal{L} + \gamma \mathcal{L}_\mathrm{reg}^{(L)}$ where $\gamma$ is a hyperparameter.

\subsubsection{Rethinking DeepHGCN through Dirichlet Energy}
\label{sec:rethinking}
As per Def.~\ref{def:dirichlet-energy-definition}, Dirichlet energy is essentially the weighted average of distances between normalized node pairs. To prevent over-smoothing, we want the energy of the last layer to be sufficiently large. On one hand, the initial residual and weight alignment ensure that the final representation contains at least a portion of the initial and previous layers. Since the energy of the starting layers is high, the residual connection mitigates the degradation of energy and retains the energy of the final layer at the same level as the previous layers. On the other hand, hyperbolic feature regularization encourages the node representation to move away from the center. This increases the distance between nodes and thus also alleviates the energy degradation.

\begin{algorithm}[tb]
\caption{DeepHGCN forward propagation}
\label{alg:algorithm}
\textbf{Input}: graph $\mathcal{G} = (\mathcal{V},\mathcal{E})$; node embeddings $\{\mathbf{x}_i\}_{i=1}^{|\mathcal{V}|}$; number of layers $L$; feature dim $d_f$, hidden dim $d_h$ and class dim $d_c$; activation $\sigma(\cdot)$; hyper-params $\{\alpha_l, \beta_l\}_{l=1}^L$ and $\gamma$\\
\textbf{Parameter}: $\{\mathbf{W}^{(l)}\}_{l=0}^L$ and $\{\mathbf{b}^{(l)}\}_{l=1}^L$\\
\textbf{Output}: Loss for back-propagation

\begin{algorithmic}[1] 
\STATE generate hyperbolic representation $\{\mathbf{h}_i^{(0)}\}_{i=1}^{|\mathcal{V}|}$ via exponential map Eq.~(\ref{eq:expmap})
transform $\mathbf{h}^{(0)} \to \mathbf{h}^{(1)}$ from $d_f$ to $d_h$ via Eq.~(\ref{eq:poincare-fc-layer})
\FOR{$l=0$ to $L$}
\STATE neighborhood aggregation on $\mathbf{h}^{(l)}$ via Eq.~(\ref{eq:hyperbolic-neighbour-agg})
\STATE residual connect $\mathbf{h}^{(1)}$ and $\mathbf{h}^{(l)}$ via Eq~(\ref{eq:hyperbolic-initial-residual})
\IF {$l==L$}
\STATE break the loop
\ENDIF
\STATE transform $\mathbf{h}^{(l)} \to \mathbf{h}^{(l+1)}$ within $\mathbb{D}^{d_h}$ via Eq.~(\ref{eq:poincare-fc-layer})
\STATE weight alignment on $\mathbf{h}^{(l)}$ and $\mathbf{h}^{(l+1)}$ via Eq.~(\ref{eq:hyperbolic-identity-map})
\ENDFOR
\STATE compute task-oriented loss $\mathcal{L}$ and feature regularization $\mathcal{L}_\text{reg}$ via Eq.~(\ref{eq:feature-reg-end}) with final embedding $\mathbf{H}^{(L)}$\;
\RETURN{} $\mathcal{L} + \gamma\mathcal{L}_\mathrm{reg}$
\end{algorithmic}
\label{alg:deephgcn-forward}
\end{algorithm}

\subsection{Training}
The DeepHGCN forward pass is described in Alg.~\ref{alg:deephgcn-forward}.
For link prediction (LP) and node classification (NC) task, we employ the same downstream decoders and loss in \cite{chami2019hyperbolic}, \textit{i.e.} Fermi-Dirac decoder for LP and cross entropy for NC. 
Since all DeepHGCN parameters are resided in Euclidean space, a standard optimizer (\textit{e.g.} Adam \cite{kingma2014adam}) instead of Riemannian optimizers \cite{becigneul2018riemannian} can be leveraged.

\section{Empirical Results}


\begin{table*}[htb]
\centering
\caption{ROC AUC results (\%) for Link Prediction (LP) and Accuracy (\%) for Node Classification (NC). $\delta$ refers to Gromovs $\delta$-hyperbolicity. A graph is more hyperbolic as $\delta\to 0$ and is a tree when $\delta = 0$. Results partially from \cite{zhu2020graph}.}
\resizebox{\textwidth}{!}{\begin{tabular}{@{}llcclcclcclcc@{}}
\toprule
\textbf{Dataset ($\delta$)} &  & \multicolumn{2}{c}{\textbf{Airport ($\delta = 1$)}}     &  & \multicolumn{2}{c}{\textbf{PubMed ($\delta =   3.5$)}}  &  & \multicolumn{2}{c}{\textbf{CiteSeer ($\delta =   5$)}}  &  & \multicolumn{2}{c}{\textbf{Cora ($\delta = 11$)}}       \\ \cmidrule(lr){3-4} \cmidrule(lr){6-7} \cmidrule(lr){9-10} \cmidrule(l){12-13} 
\textbf{Task}               &  & LP                         & NC                         &  & LP                         & NC                         &  & LP                         & NC                         &  & LP                         & NC                         \\ \midrule
MLP                         &  & 89.81${\pm 0.56}$          & 68.90${\pm 0.46}$          &  & 83.33${\pm 0.56}$          & 72.40${\pm 0.21}$          &  & 93.73${\pm 0.63}$          & 59.53${\pm 0.90}$          &  & 83.33${\pm 0.56}$          & 51.59${\pm 1.28}$          \\
HNN                         &  & 90.81${\pm 0.23}$          & 80.59${\pm 0.46}$          &  & 94.69${\pm 0.06}$          & 69.88${\pm 0.43}$          &  & 93.30${\pm 0.52}$          & 59.50${\pm 1.28}$          &  & 90.92${\pm 0.40}$          & 54.76${\pm 0.61}$          \\ \midrule
GCN                         &  & 89.31${\pm 0.43}$          & 81.59${\pm 0.61}$          &  & 89.56${\pm 3.66}$          & 78.10${\pm 0.43}$          &  & $82.56{\pm 1.92}$          & 70.35${\pm 0.41}$          &  & 90.47${\pm 0.24}$          & 81.50${\pm 0.53}$          \\
GAT                         &  & 90.85${\pm 0.23}$          & 82.75${\pm 0.36}$          &  & 91.46${\pm 1.82}$          & 78.21${\pm 0.44}$          &  & 86.48${\pm 1.50}$          & 71.58${\pm 0.80}$          &  & 93.17${\pm 0.20}$          & 83.03${\pm 0.50}$          \\
GraphSAGE                   &  & 90.41${\pm 0.53}$          & 82.19${\pm 0.45}$          &  & 86.21${\pm 0.82}$          & 77.45${\pm 2.38}$          &  & 92.05${\pm 0.39}$          & 67.51${\pm 0.76}$          &  & 85.51${\pm 0.50}$          & 77.90${\pm 2.50}$          \\
SGC                         &  & 89.83${\pm 0.32}$          & 80.59${\pm 0.16}$          &  & 94.10${\pm 0.12}$          & 78.84${\pm 0.18}$          &  & 91.35${\pm 1.68}$          & 71.44${\pm 0.75}$          &  & 91.50${\pm 0.21}$          & 81.32${\pm 0.50}$          \\ \midrule
HGNN                        &  & 96.42${\pm 0.44}$          & 84.71${\pm 0.98}$          &  & 92.75${\pm 0.26}$          & 77.13${\pm 0.82}$          &  & 93.58${\pm 0.33}$          & 69.99${\pm 1.00}$          &  & 91.67${\pm 0.41}$          & 78.26${\pm 1.19}$          \\
HGCN                        &  & 96.43${\pm 0.12}$          & 89.26${\pm 1.27}$          &  & 95.13${\pm 0.14}$          & 76.53${\pm 0.64}$          &  & 96.63${\pm 0.09}$          & 68.04${\pm 0.59}$          &  & 93.81${\pm 0.14}$          & 78.03${\pm 0.98}$          \\
HGAT                        &  & 97.86${\pm 0.09}$          & 89.62${\pm 1.03}$          &  & 94.18${\pm 0.18}$          & 77.42${\pm 0.66}$          &  & 95.84${\pm 0.37}$          & 68.64${\pm 0.30}$          &  & \textbf{94.02${\pm 0.18}$} & 78.32${\pm 1.39}$          \\
HyboNet                     &  & 97.30${\pm 0.30}$          & 90.90${\pm 1.40}$          &  & 95.80${\pm 0.20}$          & 78.00${\pm 1.00}$          &  & 96.70${\pm 0.80}$          & 69.80${\pm 0.60}$          &  & 93.60${\pm 0.30}$          & 80.20${\pm 1.30}$          \\ \midrule
\textcolor{blue}{DeepHGCN}  &  & \textbf{98.13${\pm 0.33}$} & \textbf{94.70${\pm 0.90}$} &  & \textbf{96.15${\pm 0.17}$} & \textbf{79.43${\pm 0.92}$} &  & \textbf{97.45${\pm 0.44}$} & \textbf{73.31${\pm 0.70}$} &  & 93.90${\pm 0.78}$          & \textbf{83.64${\pm 0.40}$} \\ \bottomrule
\end{tabular}}
\label{tb:summary-lpnc}
\end{table*}

\subsection{Experiment Setup}

\subsubsection{Datasets} Four homophilic benchmark datasets are considered for node classification and link prediction: Airport \cite{chami2019hyperbolic}, PubMed \cite{namata2012query}, CiteSeer \cite{giles1998citeseer} and Cora \cite{sen2008collective}, with statistics in Tab.~\ref{tb:summary-dataset}. Airport dataset is a transductive dataset in which nodes represent airports and edges indicate airline routes as derived from OpenFlights. The labels of nodes are determined by the population of the country to which the airport belongs. Cora, PubMed, and CiteSeer are benchmarks for citation networks, where nodes represent papers connected by citations. We report the hyperbolicity of each datatset (lower is more hyperbolic) as defined in \cite{gromov1987hyperbolic}. 
\begin{table}[htb]
\centering
\caption{Statistics of the general HGNN benchmark datasets.}
\resizebox{0.9\linewidth}{!}{\begin{tabular}{lrrrrr} 
\toprule
Dataset  & \# Nodes & \# Edges & Classes & Features & $\delta$  \\ 
\midrule
Disease  & 1,044    & 1,043    & 2       & 1,000    & 0         \\
Airport  & 3,188    & 18,631   & 4       & 4        & 1         \\
PubMed   & 19,717   & 44,338   & 3       & 500      & 3.5       \\
CiteSeer & 3,327    & 4,732    & 6       & 3,703    & 5         \\
Cora     & 2,708    & 5,429    & 7       & 1,433    & 11        \\
\bottomrule
\end{tabular}}
\label{tb:summary-dataset}
\end{table}

\reone{
For heterophilic datasets, we evaluate node classification on three benchmarks, respectively, \textsc{Cornell}, \textsc{Texas} and \textsc{Wisconsin} \cite{pei2020geom} from the WebKB dataset (webpage networks). Detailed statistics are summarized in Tab.~\ref{tb:summary-dataset-hetro}. We use the original fixed 10 split datasets. In Tab.~\ref{tb:summary-dataset-hetro}, we report the homophily level $\mathcal{H}$ of each dataset, a sufficiently low $\mathcal{H}\le 0.3$ means that the dataset is more heterophilic when most of neighbours are not in the same class.
}

\begin{table}[htb]
\centering
\caption{Statistics of heterophilic benchmark datasets.}
\resizebox{0.9\linewidth}{!}{\begin{tabular}{lrrrrr} 
\toprule
Dataset  & \# Nodes & \# Edges & Classes & Features & $\mathcal{H}$  \\ 
\midrule
\text{Texas}  & 183    & 295    & 5       & 1,703    & 0.11         \\
\text{Cornell}  & 182    & 295   & 5       & 1,703        & 0.21         \\
\text{Wisconsin}   & 251   & 499   & 5       & 1,703      & 0.30       \\
\bottomrule
\end{tabular}}
\label{tb:summary-dataset-hetro}
\end{table}

\subsubsection{Baselines} Both (1) \textit{Euclidean-hyperbolic comparison} and (2) \textit{deep model comparison} are conducted in the experiment. For (1), we compare our model to 2 feature-based models, 4 Euclidean and 4 hyperbolic graph-based models. Feature-based models: without utilizing graph structure, we feed node feature into MLP and its hyperbolic variant HNN \cite{ganea2018hyperbolic} to predict node labels or links. Graph-based models: we consider GCN \cite{kipf2016semi}, GAT \cite{velivckovic2017graph}, GraphSAGE \cite{hamilton2017inductive}, and SGC \cite{wu2019simplifying} as Euclidean GNN methods. We consider HGNN \cite{liu2019hyperbolic}, HGCN \cite{chami2019hyperbolic}, HGAT \cite{gulcehre2018hyperbolic} and HyboNet \cite{chen2021fully} as hyperbolic variants GNNs. For (2), we compare our model to the state-of-the-art deep GCN model GCNII \cite{chen2020simple}, and also show the performance of vanilla GCN and HGCN under different layer settings.

\subsubsection{Parameter Settings} Under homophilic setting, for node classification, the train/val/test percentage for Airport is 70/15/15\% and standard splits \cite{kipf2016semi, yang2016revisiting} on citation networks. For link prediction, we employ 85/5/10\% edge splits on all datasets. \reone{For heterophilic datasets, the splits are in 60/20/20\% and use all 10 random splits to compute the averaged results.} All models are 16-dimension to ensure a fair comparison. We use Adam for training DeepHGCN, for other methods, we use the Adam and Riemannian Adam optimizer, respectively, for Euclidean and hyperbolic models. We set $\alpha$ to 0.1 and $\beta_l$ for the $l$-th layer to $\log(\frac{\lambda}{l} + 1)$ following \cite{chen2020simple}. The other hyperparameters are obtained via grid search, where $\lambda$: [0.4, 0.5, 0.6], $\gamma$: [1e-3, 1e-4, 1e-5], weight decay: [1e-3, 5e-4, 1e-4], and dropout: [0.0-0.6]. We conduct experiments averaging 10 sets of embeddings' quality using different random seeds. More details are listed in the Appendix.

\begin{table}[htb]
    \centering
    \caption{Test accuracy of different 16-dim models with various depth. Numbers in \textit{bold} denote the best of each model, and in \textit{red bold} highlight the best models for each dataset.}
        \resizebox{\linewidth}{!}{\begin{tabular}{@{}clccccc@{}}
\toprule
                                                               &                                  & \multicolumn{5}{c}{\textbf{Layers}}                                                                                                                                                                                                   \\ \cmidrule(l){3-7} 
\multirow{-2}{*}{\textbf{}}                                    & \multirow{-2}{*}{\textbf{Model}} & 2                         & 8                                                & 16                                               & 32                                               & 64                                               \\ \midrule
                                                               & GCN                              & 78.1$_{\pm 1.5}$          & \textbf{83.2$_{\pm 4.7}$}                        & 47.0$_{\pm 1.4}$                                 & 47.0$_{\pm 1.4}$                                 & 47.0$_{\pm 1.4}$                                 \\
                                                               & GCNII                            & 86.9$_{\pm 2.1}$          & 88.3$_{\pm 2.1}$                                 & \textbf{88.3$_{\pm 1.2}$}                        & 86.6$_{\pm 1.2}$                                 & 86.6$_{\pm 2.3}$                                 \\
                                                               & HGCN                             & \textbf{89.3$_{\pm 1.2}$} & 86.9$_{\pm 4.3}$                                 & 52.8$_{\pm 8.6}$                                 & 47.2$_{\pm 1.2}$                                 & 44.5$_{\pm 1.6}$                                 \\
\multirow{-4}{*}{\rotatebox[origin=c]{90}{\textbf{Airport}}}   & DeepHGCN                         & 89.7$_{\pm 1.9}$          & {\color[HTML]{FE0000} \textbf{94.7$_{\pm 0.9}$}} & 93.9$_{\pm 1.2}$                                 & 93.1$_{\pm 0.9}$                                 & 92.9$_{\pm 1.3}$                                 \\ \midrule
                                                               & GCN                              & \textbf{78.1$_{\pm 0.5}$} & 40.7$_{\pm 0.0}$                                 & 40.7$_{\pm 0.0}$                                 & 40.7$_{\pm 0.0}$                                 & 40.7$_{\pm 0.0}$                                 \\
                                                               & GCNII                            & 77.9$_{\pm 0.8}$          & 76.9$_{\pm 2.5}$                                 & 78.5$_{\pm 1.0}$                                 & {\color[HTML]{FE0000} \textbf{79.4$_{\pm 0.5}$}} & 79.1$_{\pm 0.8}$                                 \\
                                                               & HGCN                             & \textbf{76.5$_{\pm 0.6}$} & 56.0$_{\pm 5.2}$                                 & 54.2$_{\pm 7.5}$                                 & 45.3$_{\pm 6.9}$                                 & 43.7$_{\pm 1.5}$                                 \\
\multirow{-4}{*}{\rotatebox[origin=c]{90}{\textbf{PubMed}}}    & DeepHGCN                         & 78.0$_{\pm 0.4}$          & 78.1$_{\pm 0.7}$                                 & 78.5$_{\pm 0.5}$                                 & \textbf{79.4$_{\pm 0.9}$}                        & 79.0$_{\pm 0.6}$                                 \\ \midrule
                                                               & GCN                              & \textbf{71.0$_{\pm 1.0}$} & 18.1$_{\pm 0.0}$                                 & 18.1$_{\pm 0.0}$                                 & 18.1$_{\pm 0.0}$                                 & 18.1$_{\pm 0.0}$                                 \\
                                                               & GCNII                            & 54.7$_{\pm 7.4}$          & 56.1$_{\pm 6.1}$                                 & 66.5$_{\pm 6.2}$                                 & \textbf{69.4$_{\pm 1.8}$}                        & 68.6$_{\pm 1.8}$                                 \\
                                                               & HGCN                             & \textbf{68.0$_{\pm 0.6}$} & 39.5$_{\pm 7.1}$                                 & 30.3$_{\pm 3.7}$                                 & 31.2$_{\pm 4.9}$                                 & 24.7$_{\pm 1.3}$                                 \\
\multirow{-4}{*}{\rotatebox[origin=c]{90}{\textbf{CiteSeer}}}  & DeepHGCN                         & 72.7$_{\pm 0.5}$          & 71.8$_{\pm 1.5}$                                 & 71.9$_{\pm 0.8}$                                 & 72.7$_{\pm 0.5}$                                 & {\color[HTML]{FE0000} \textbf{73.3$_{\pm 0.4}$}} \\ \midrule
                                                               & GCN                              & \textbf{81.9$_{\pm 1.1}$} & 31.9$_{\pm 0.0}$                                 & 31.9$_{\pm 0.0}$                                 & 31.9$_{\pm 0.0}$                                 & 31.9$_{\pm 0.0}$                                 \\
                                                               & GCNII                            & 78.0$_{\pm 3.0}$          & 76.4$_{\pm 2.2}$                                 & 77.2$_{\pm 3.5}$                                 & 83.7$_{\pm 0.9}$                                 & {\color[HTML]{FE0000} \textbf{84.0$_{\pm 1.3}$}} \\
                                                               & HGCN                             & \textbf{78.1$_{\pm 0.9}$} & 33.5$_{\pm 5.5}$                                 & 30.8$_{\pm 3.9}$                                 & 24.1$_{\pm 7.3}$                                 & 21.7$_{\pm 7.8}$                                 \\
\multirow{-4}{*}{\rotatebox[origin=c]{90}{\textbf{Cora}}}      & DeepHGCN                         & 82.5$_{\pm 0.5}$          & \textbf{83.6$_{\pm 0.4}$}                        & 82.7$_{\pm 1.0}$                                 & 82.4$_{\pm 0.6}$                                 & 83.1$_{\pm 0.7}$                                 \\ \midrule
                                                               & GCN                              & 40.5$_{\pm 8.0}$          & \textbf{41.6$_{\pm 7.2}$}                        & 41.1$_{\pm 8.1}$                                 & 41.6$_{\pm 7.1}$                                 & 41.1$_{\pm 7.9}$                                 \\
                                                               & GCNII                            & \textbf{72.9$_{\pm 4.1}$} & 71.8$_{\pm 4.5}$                                 & 71.8$_{\pm 5.0}$                                 & 69.9$_{\pm 5.8}$                                 & 72.4$_{\pm 4.3}$                                 \\
                                                               & HGCN                             & \textbf{62.1$_{\pm 3.7}$} & 60.3$_{\pm 4.1}$                                 & 58.7$_{\pm 4.4}$                                 & 57.2$_{\pm 4.6}$                                 & 55.9$_{\pm 4.9}$                                 \\
\multirow{-4}{*}{\rotatebox[origin=c]{90}{\textbf{Cornell}}}   & DeepHGCN                         & 75.9$_{\pm 3.6}$          & 74.3$_{\pm 3.8}$                                 & {\color[HTML]{FE0000} \textbf{79.7$_{\pm 3.8}$}} & 76.0$_{\pm 5.1}$                                 & 77.4$_{\pm 4.3}$                                 \\ \midrule
                                                               & GCN                              & 41.1$_{\pm 7.5}$          & 57.8$_{\pm 5.6}$                                 & \textbf{57.8$_{\pm 5.8}$}                        & 55.1$_{\pm 5.6}$                                 & 57.8$_{\pm 5.6}$                                 \\
                                                               & GCNII                            & 80.9$_{\pm 7.1}$          & 78.1$_{\pm 4.3}$                                 & 79.3$_{\pm 5.2}$                                 & \textbf{81.6$_{\pm 6.3}$}                        & 75.7$_{\pm 8.7}$                                 \\
                                                               & HGCN                             & 48.1$_{\pm 6.1}$          & \textbf{53.3$_{\pm 3.7}$}                        & 50.5$_{\pm 5.2}$                                 & 50.4$_{\pm 5.8}$                                 & 45.6$_{\pm 7.3}$                                 \\
\multirow{-4}{*}{\rotatebox[origin=c]{90}{\textbf{Texas}}}     & DeepHGCN                         & 76.3$_{\pm 5.4}$          & 78.9$_{\pm 3.8}$                                 & 81.9$_{\pm 6.2}$                                 & {\color[HTML]{FE0000} \textbf{82.2$_{\pm 5.3}$}} & 80.8$_{\pm 3.6}$                                 \\ \midrule
                                                               & GCN                              & 43.1$_{\pm 3.7}$          & 47.8$_{\pm 8.2}$                                 & \textbf{49.0$_{\pm 8.1}$}                        & 47.9$_{\pm 9.4}$                                 & 47.8$_{\pm 8.8}$                                 \\
                                                               & GCNII                            & 83.7$_{\pm 7.2}$          & {\color[HTML]{FE0000} \textbf{84.1$_{\pm 3.9}$}} & 83.1$_{\pm 1.9}$                                 & 83.5$_{\pm 3.5}$                                 & 80.4$_{\pm 4.3}$                                 \\
                                                               & HGCN                             & \textbf{57.9$_{\pm 5.8}$} & 55.7$_{\pm 6.3}$                                 & 54.2$_{\pm 6.7}$                                 & 51.1$_{\pm 7.4}$                                 & 48.5$_{\pm 8.0}$                                 \\
\multirow{-4}{*}{\rotatebox[origin=c]{90}{\textbf{Wisconsin}}} & DeepHGCN                         & 79.2$_{\pm 4.2}$          & 81.3$_{\pm 5.5}$                                 & \textbf{84.0$_{\pm 4.4}$}                        & 83.7$_{\pm 5.7}$                                 & 82.1$_{\pm 5.3}$                                 \\ \bottomrule
\end{tabular}}
\label{tb:summary-deep}
\end{table}
\subsection{Experiment Results}

\subsubsection{Comparison with Hyperbolic Models}
In Tab.~\ref{tb:summary-lpnc}, we report the averaged ROC AUC for link prediction and F1 score for node classification on various datasets. The dimensions for all models are (16) except for HyboNet (64) since it cannot be trained stably with low embedding dimension. We present our model with the hyperparameter settings that produce the best outcomes. Compared with baselines, our proposed DeepHGCN achieves the best NC and LP performance among all datasets with high hyperbolicity $\delta$. On the Cora dataset with low $\delta$, the DeepHGCN does not outperform the attention-based model on LP and only outperforms the Euclidean GCN on NC by a small margin, trading off the expense of training time. This suggests attention and Euclidean geometry are more suitable for non-hierarchical graph structures.

\subsubsection{Comparison with Deep Models}

In Tab.~\ref{tb:summary-deep}, we evaluate the deep models with different numbers of layers. Instead of using 64-dimensional hidden layers like \cite{chen2020simple}, we apply 16-dimensions for all models. We observe that the performance of GCN and HGCN rapidly degrades when the number of layers surpasses 8, indicating that they are susceptible to over-smoothing. On the other hand, the GCNII does not outperform the shallow models with 2 layers, while consistently improving with more layers and achieving the best result on Cora and PubMed. \reone{The DeepHGCN can not only perform well with 2 layers, but outperforms all models on Airport and CiteSeer with $8 \sim 64$ layers with competitive results. Under the heterophilic datasets Cornell, Texas, and Wisconsin, the DeepHGCN also demonstrates strong performance with deeper architectures, which achieves the highest accuracy of 79.7\% on Cornell with 16 layers, 82.2\% on Texas with 32 layers, and 84.0\% on Wisconsin with 16 layers.} These results suggest our model can alleviate the over-smoothing issue of vanilla HGCN and capable of retrieving information from higher-order neighbors.

\begin{figure}[htb]
    \centering
\begin{center}
\subfigure{
\centering
\label{Fig.contribution.sub.1}
\includegraphics[width=.315\linewidth]{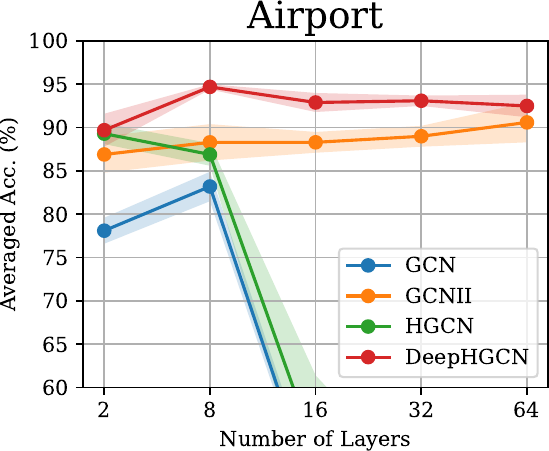}
}
\hspace{-0.4cm}
\subfigure{
\centering
\label{Fig.contribution.sub.2}
\includegraphics[width=.315\linewidth]{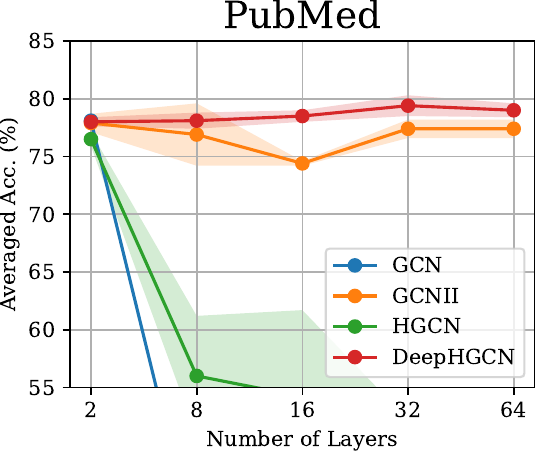}
}
\hspace{-0.4cm}
\subfigure{
\centering
\label{Fig.contribution.sub.3}
\includegraphics[width=.315\linewidth]{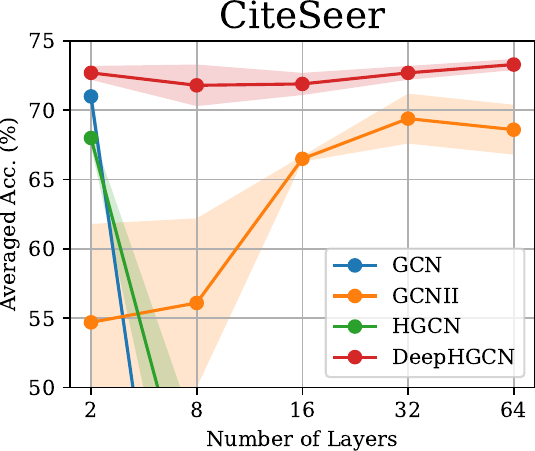}
}

\subfigure{
\centering
\label{Fig.contribution.sub.4}
\includegraphics[width=.315\linewidth]{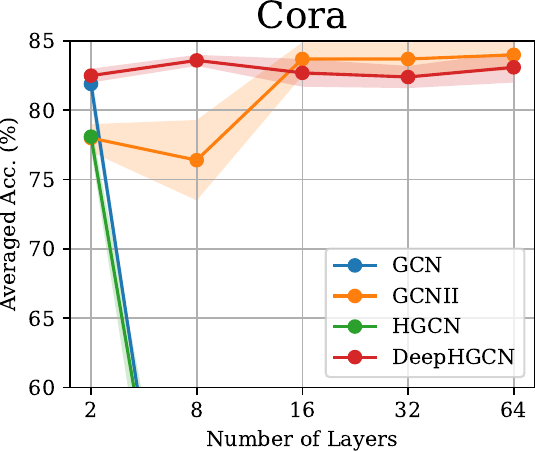}
}
\hspace{-0.4cm}
\subfigure{
\centering
\label{Fig.contribution.sub.5}
\includegraphics[width=.315\linewidth]{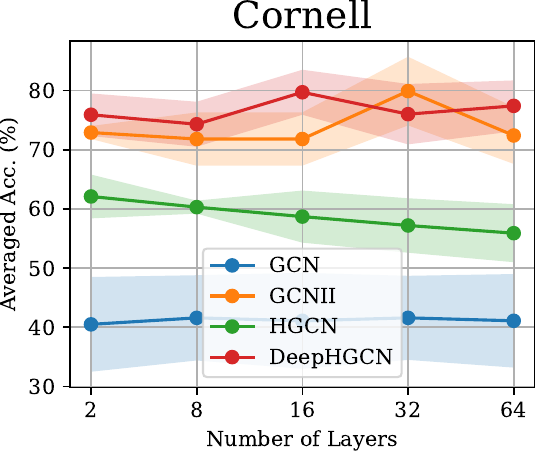}
}
\hspace{-0.4cm}
\subfigure{
\centering
\label{Fig.contribution.sub.6}
\includegraphics[width=.315\linewidth]{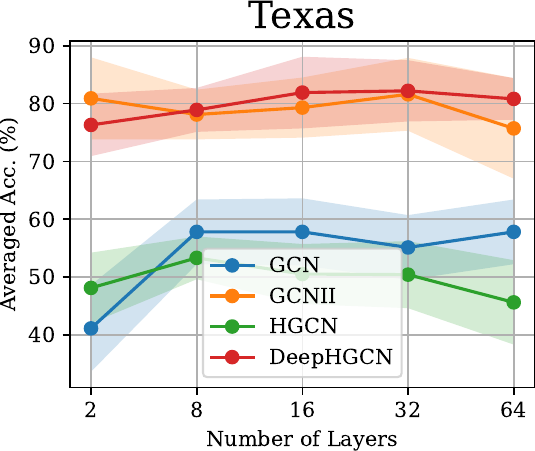}
}
\end{center}
\caption{\reone{Averaged performance of different models with various numbers of layers. We include hyperbolic, homophilic, and heterophilic datasets. The deeper models that overcome over-smoothing generally perform better.}}
\label{Fig.layer_contribution}
\end{figure}
\reone{
\textit{Remark on benefit and limitation of depth.} In Fig.~\ref{Fig.layer_contribution} illustrates the performance comparison through layers. It is worth noting that models with $2$ layers are not sufficient for achieving the best performance. Our DeepHGCN achieves strong performance among groups with generally $8\sim 32$ layers, similar to GCNII according to our validation. As $8$-layer is already considered deep for graph models, we conclude that depth is beneficial for HGCN performance. By carefully tuning the depth, DeepHGCN could achieve significantly better performance than its $2$-layer baselines. However, on some datasets like Texas, the performance of all models declines with added depth beyond a certain point (\textit{e.g} $32\to 64$ layers). This suggests that while our approach helps, extremely deep models may have diminishing returns or even detriments on some datasets. This suggests that DeepHGCN is a remedy rather than a solution to hyperbolic model over-smoothing. Further research is needed to fully address the limitations of over-smoothness and realize the full potential in hyperbolic graph learning.
}

\subsubsection{Over-Smoothing Analysis}
\begin{figure}[h]
    \centering
\begin{center}
\subfigure{
\centering
\label{Fig.energy.sub.1}
\includegraphics[width=.475\linewidth]{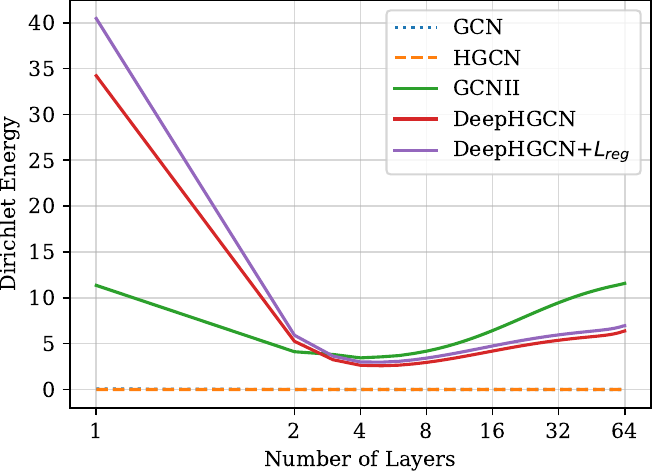}
}
\hspace{-0.2cm}
\subfigure{
\centering
\label{Fig.energy.sub.2}
\includegraphics[width=.465\linewidth]{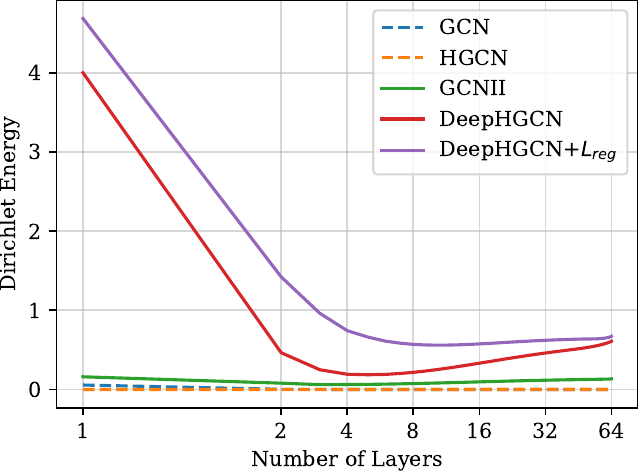}
}
\end{center}
\caption{Dirichlet energy variation through layers of different models on Cora (\textbf{Left}) and CiteSeer (\textbf{Right}). \reone{The x-axis is plotted in log-scale with base 2.}}
\label{Fig.dirichlet-variation}
\end{figure}
In Fig.~\ref{Fig.dirichlet-variation}, we show the Dirichlet energy at each layer of a 64-layer DeepHGCN, comparing with GCN, HGCN and GCNII. Due to the over-smoothing issue, the energy of node embeddings in GCN and HGCN converges rapidly to zero. By fine-tuning the initial residual and weight alignment coefficients, the proposed DeepHGCN is able to obtain energy levels comparable to GCNII on Cora and CiteSeer. We also notice the feature regularized model tends to have higher energy through all layers compared with the unregularized ones. This validates our arguments in Sec.~\ref{sec:rethinking}.

\subsubsection{Ablation Study}
\label{sec:ablation_study}
\begin{figure}[ht]
\centering
\subfigure{
\centering
\label{Fig.ablation.sub.1}
\includegraphics[page=1,width=.47\linewidth]{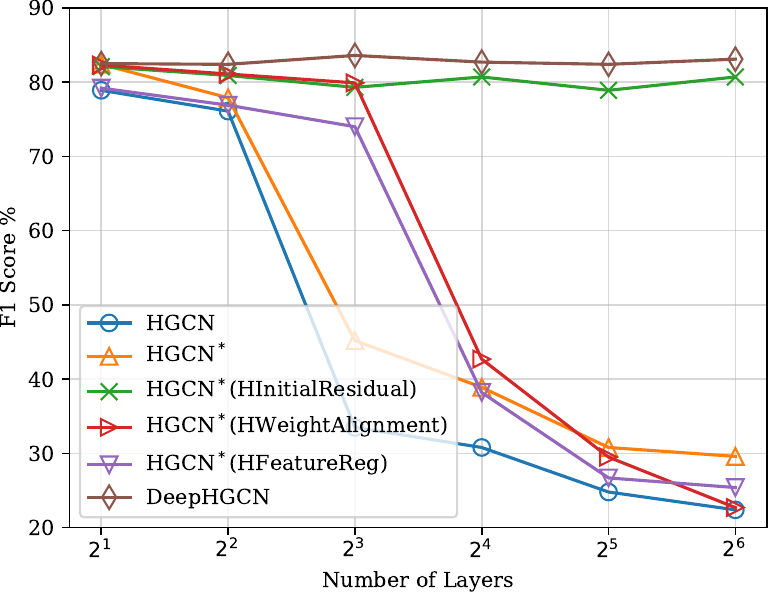}
}
\hspace{-0.2cm}
\subfigure{
\centering
\label{Fig.ablation.sub.2}
\includegraphics[page=1,width=.47\linewidth]{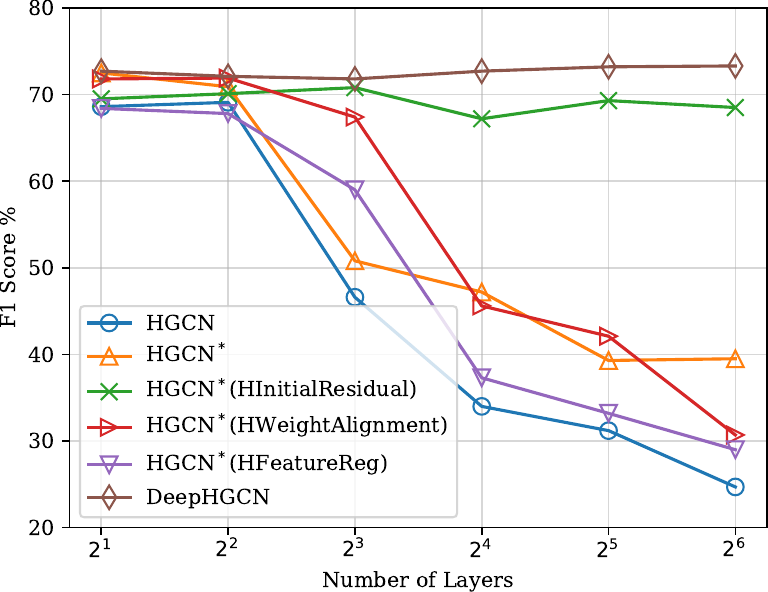}
}
\caption{Ablation study on hyperbolic initial residual, weight alignment and feature regularization on Cora (\textbf{Left}) and CiteSeer (\textbf{Right}) dataset. HGCN$^*$ denotes the HGCN with the proposed efficient backbone.}
\label{Fig.ablation-variation}
\end{figure}
\begin{figure}[h]
\centering
\begin{center}
\subfigure{
\centering
\label{Fig.moduleablation.sub.1}
\includegraphics[width=.54\linewidth]{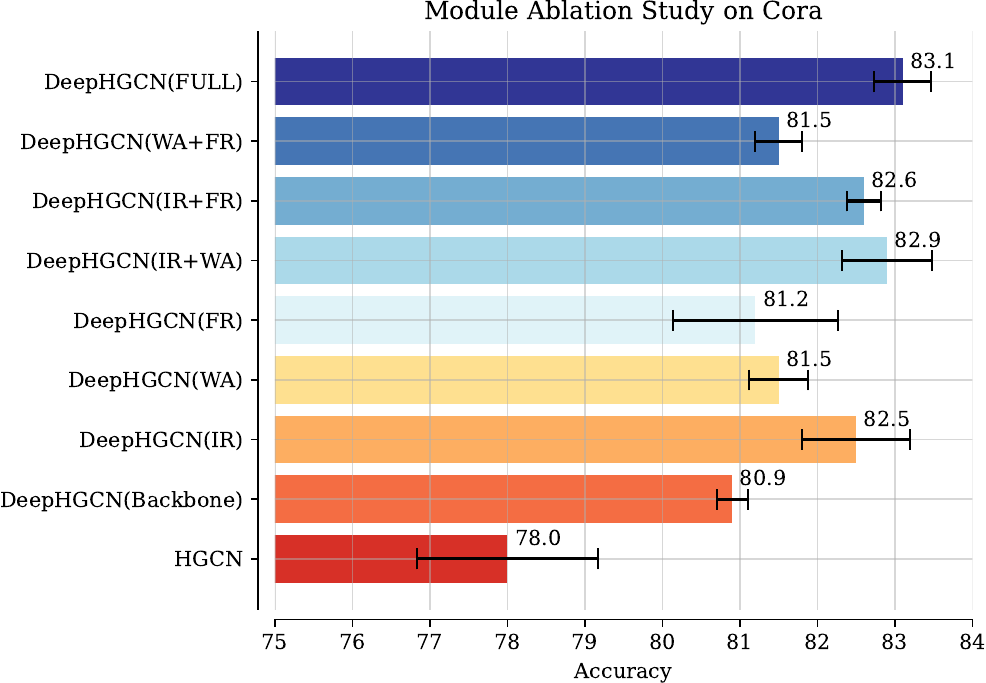}
}
\hspace{-0.2cm}
\subfigure{
\centering
\label{Fig.moduleablation.sub.2}
\includegraphics[width=.4\linewidth]{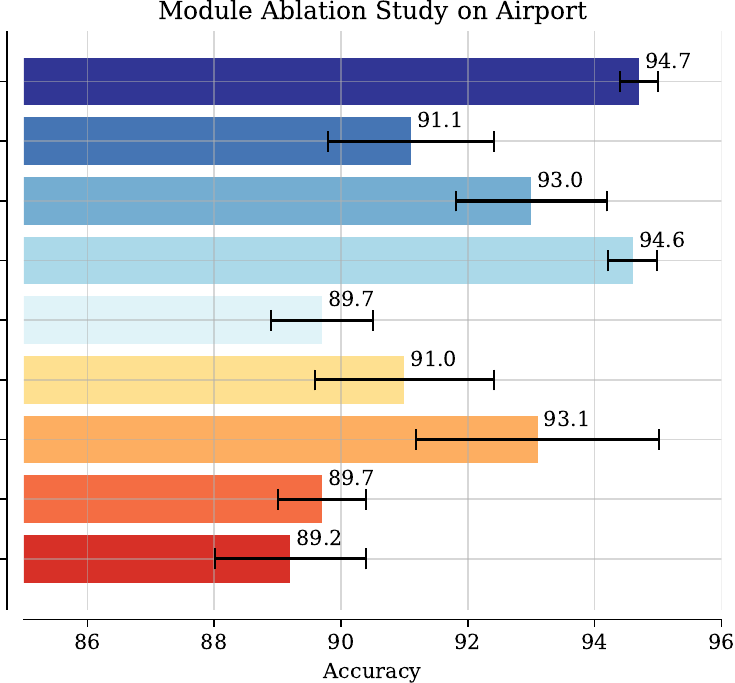}
}
\end{center}
\caption{Performance of permuted components on Cora (\textbf{Left}) and Airport (\textbf{Right}) in $10$ evaluations, we show mean $\pm$ std for each model.}
\label{Fig.component-ablation}
\end{figure}
We study the contributions of our introduced techniques. In Fig.~\ref{Fig.ablation-variation}, we show the performance of DeepHGCN with different depths compared to HGCNs  with one of the proposed techniques applied, further, we provide ablation studies on various component permutation on Airport and Cora datasets in Fig.~\ref{Fig.component-ablation}. We find that hyperbolic initial residual is the most effective module for alleviating over-smoothing, although performance of the 2-layer model still dominates. Directly applying weight alignment and feature regularization also relieves over-smoothing but the models still degrade after depth beyond $2^3$.  This is because a small $\beta_l$ value in Eq.~(\ref{eq:hyperbolic-identity-map}) forces the norm of the weight matrix $\|\mathbf{W}\|$ to be small. According to the theory in \cite{oono2019graph}, the loss of information in an $L$-layer GCN is related to the maximum singular value s of $\mathbf{W}^{(L)}$. A smaller norm $\|\mathbf{W}\|$ implies a smaller $s^L$, thus relieving the loss of information and over-smoothing. However, when the number of layers grows very deep (\textit{e.g.} beyond $2^3$), the multiplicative effect of $\beta$s over many layers ($\prod \beta_l$) may become too small. This could make the model overly reliant on the initial features and not have enough learning capacity, leading to degraded performance. As for feature regularization, applying the technique over a very deep model with many layers may force the node embeddings to be pushed too far apart. This could potentially distort the embed space and make it difficult to preserve the similarities between nodes that are needed for tasks like classification. As a result, performance degrades when the regularization is applied to an overly deep model. Therefore, the DeepHGCN simultaneously apply all three techniques, which assures that performance improves as network depth increases. This suggests that all of the techniques are indispensable for resolving the over-smoothing problem.

\begin{figure}[h]
\centering
\begin{center}
\subfigure{
\centering
\label{Fig.alphacmp.sub.1}
\includegraphics[width=.47\linewidth]{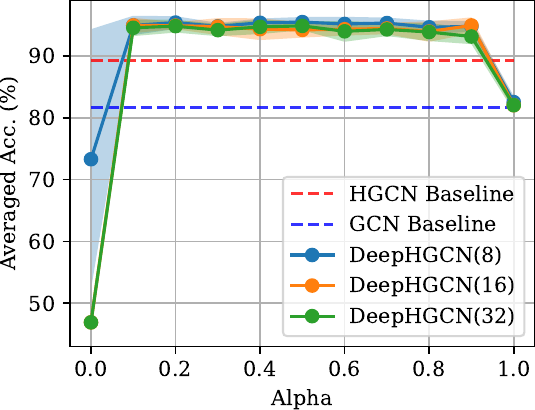}
}
\hspace{-0.2cm}
\subfigure{
\centering
\label{Fig.alphacmp.sub.2}
\includegraphics[width=.47\linewidth]{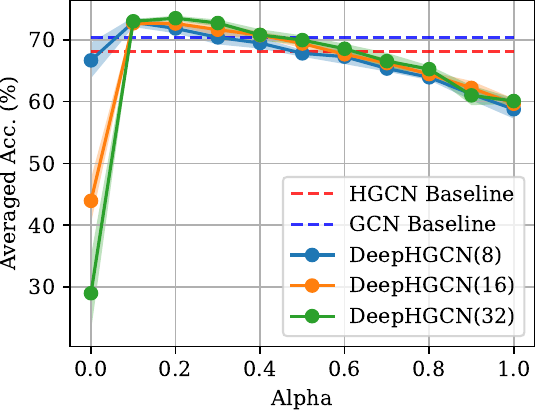}
}
\end{center}
\caption{Ablation study on the impact of different $\alpha_l$ in Eq.~(\ref{eq:hyperbolic-initial-residual}) on Airport (\textbf{Left}) and CiteSeer (\textbf{Right}). A higher $\alpha_l$ value indicates a stronger emphasis on initial embedding, vice versa.}
\label{Fig.alpha-ablation}
\end{figure}
\reone{
Additionally, as the hyperbolic initial residual in Eq.~(\ref{eq:hyperbolic-initial-residual}) contribute significantly to the overall performance, we study the impact of different choices of $\alpha_l$s. With various depth $(8, 16, 32)$ of DeepHGCN backbone, we select $\alpha_l$ from $0$ to $1$ with a step size $0.1$. From Fig.~\ref{Fig.alpha-ablation} we can observe that with Airport, the performance is stably well across $\alpha_l=0.1\sim 0.9$. Under CiteSeer, the performance of DeepHGCN drastically increases when $\alpha_l$ goes from $0$ to $0.1$, then gradually decreases as $\alpha_l$ increases. This suggests that DeepHGCN is adaptive to various $\alpha_l$s, while generally a small portion of initial residual is enough for achieving good performance. 
}

\subsubsection{Embedding Visualization}
\begin{figure}[t]
    \centering
    \includegraphics[width=0.98\linewidth]{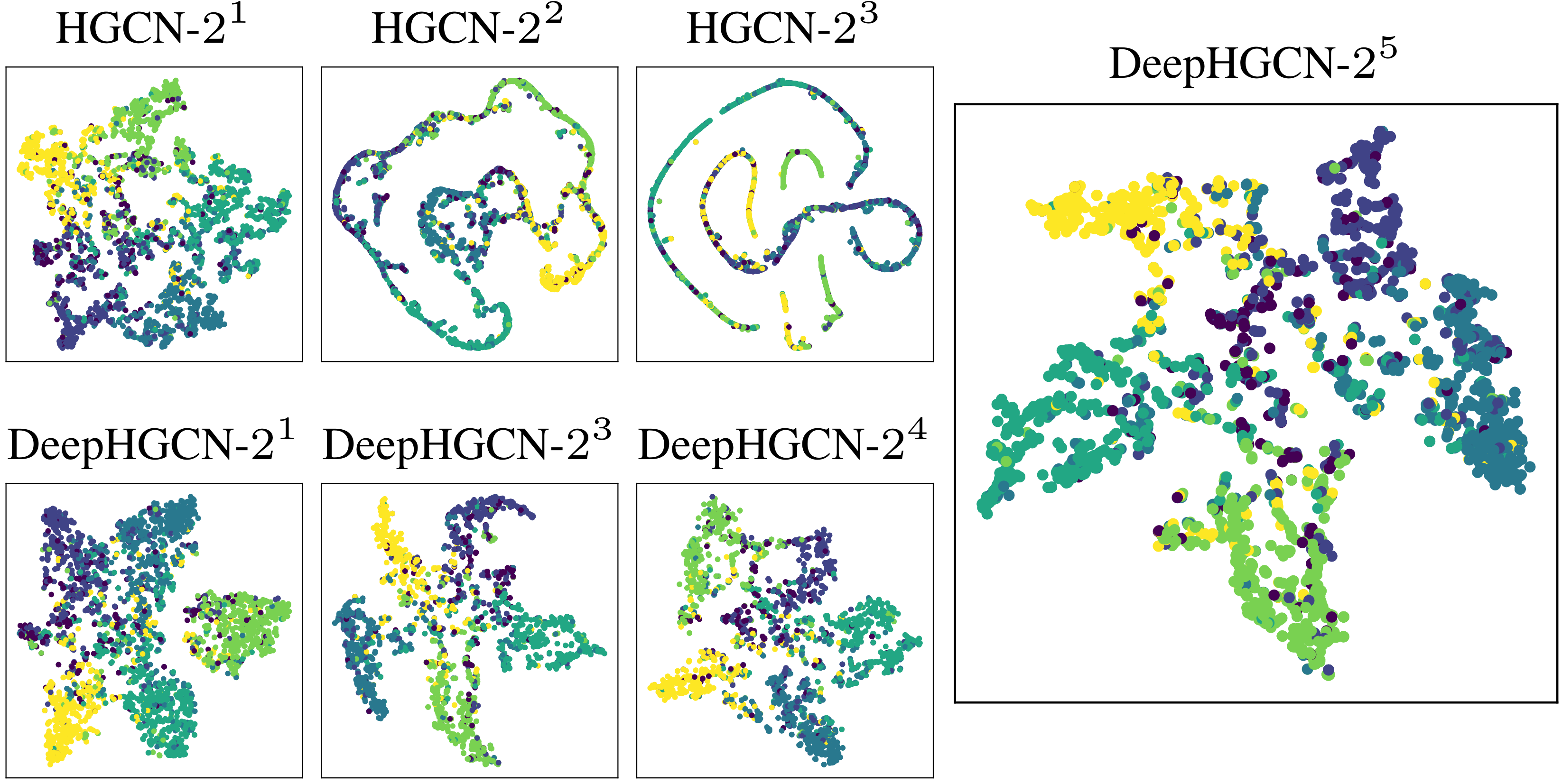}
    \caption{Visualize of node embeddings (after t-SNE) variation through layers on CiteSeer. \reone{Different colors represent different ground truth class labels.}}
\label{Fig.embedding-variation-tsne}
\end{figure}
To better illustrate the effectiveness of DeepHGCN in maintaining distinguishable node representations in deeper layers, we visualize the 16-dimensional node embeddings of both HGCNs and DeepHGCNs using t-SNE \cite{van2008visualizing}. As shown in Fig.~\ref{Fig.embedding-variation-tsne}, as the depth increases, the embeddings of distinct classes in HGCNs tend to converge, resulting in indistinguishable representations. This phenomenon is a clear indication of the over-smoothing problem, which hinders the model's ability to capture meaningful node information in deeper layers. In contrast, DeepHGCN is able to obtain node embeddings that remain well-separated and clearly defined, even in higher layer settings. This evidenced the efficacy of our proposed techniques in alleviating the over-smoothing issue and preserving the discriminative power of node representations in deep hyperbolic networks.

\section{Conclusion}
In this paper, we introduce DeepHGCN, a novel deep hyperbolic graph neural network model that addresses the challenges of developing deeper HGCN architectures while mitigating the over-smoothing problem. Our model is powered by a computationally-efficient and expressive backbone, featuring a fast and accurate hyperbolic linear layer for feature transformation. Additionally, we propose a set of techniques, including hyperbolic initial residual connections, weight alignment, and feature regularization, which work together to effectively prevent over-smoothing while preserving the manifold constraint. Extensive experiments on both hierarchical and non-hierarchical datasets demonstrate that DeepHGCN achieves competitive performance compared to existing Euclidean and shallow hyperbolic GCN models, highlighting the efficacy of our approach in capturing complex graph structures. Future research directions include extending our model to mixed-curvature manifolds and semi-Riemannian manifolds, as well as further investigating techniques to fully resolve the over-smoothing issue in extremely deep architectures.


%

\appendices
\section{Hyperbolic Model Details}
\begin{table*}[h]
\caption{Summary of operations in the Poincar\'e ball model and the Lorentz (hyperboloid) model ($\kappa < 0$)}
\resizebox{\textwidth}{!}{\centering
\begin{tabular}{@{}ccc@{}}
\toprule
                            & \textbf{Poincar\'e Ball Model} $\mathbb{D}$                                                                                                                               & \textbf{Lorentz Model} $\mathbb{L}$                                                                                                                                                                                                  \\
\textbf{Point Set}           & $\mathcal{D}_\kappa^n=\left\{\mathbf{x} \in \mathbb{R}^n:\|\mathbf{x}\|<-\frac{1}{\kappa}\right\}$                                                                    & $\mathcal{L}_\kappa^n=\left\{\mathbf{x} \in \mathbb{R}^{n+1}:\langle \mathbf{x}, \mathbf{x}\rangle_{\mathcal{L}}=\frac{1}{\kappa}\right\}$                                                                                                                      \\ \midrule
\textbf{Metric Tensor}             & $g^{\mathbb{D}_\kappa}_\mathbf{x} = (\lambda_\mathbf{x}^\kappa)^2 g^\mathbb{E}$, where $\lambda_\mathbf{x}^\kappa = \frac{2}{1+\kappa\|\mathbf{x}\|^2}$ and $g^{\mathbb{E}} = \mathbf{I}$                                & $g_\mathbf{x}^{\mathbb{L}_\kappa}=\eta$, where $\eta$ is $I$ except $\eta_{0,0}=-1$                                                                                                                                                       \\ \midrule
\textbf{Geodesic Length}           & $d_\mathbb{D}^\kappa(\mathbf{x},\mathbf{y}) = \frac{2}{\sqrt{|\kappa|}}\tanh^{-1}\left( \sqrt{|\kappa|}\| - \mathbf{x} \oplus_\kappa \mathbf{y}  \| \right)$                     & $d_{\mathbb{L}}^\kappa(\mathbf{x}, \mathbf{y})=\frac{1}{\sqrt{|\kappa|}} \cosh ^{-1}\left(\kappa\langle \mathbf{x}, \mathbf{y}\rangle_{\mathcal{L}}\right)$                                                                                                                 \\ \midrule
\textbf{Exponential Map}    & $\exp_\mathbf{x}^\kappa (\mathbf{v}) = \mathbf{x} \oplus_\kappa \left( \tanh (\sqrt{|\kappa|} \frac{\lambda_\mathbf{x}^\kappa \|\mathbf{v}\|}{2} \frac{\mathbf{v}}{\sqrt{|\kappa|}\|\mathbf{v}\|} ) \right)$ & $\exp _\mathbf{x}^\kappa(\mathbf{v})=\cosh \left(\sqrt{|\kappa|}\|\mathbf{v}\|_{\mathcal{L}}\right) \mathbf{x}+\mathbf{v} \frac{\sinh \left(\sqrt{|\kappa|}\|\mathbf{v}\|_{\mathcal{L}}\right)}{\sqrt{|\kappa|}|| \mathbf{v} \|_{\mathcal{L}}}$                                                          \\ \midrule
\textbf{Logarithmic Map}    & $\log_\mathbf{x}^\kappa (\mathbf{y}) = \frac{2}{\sqrt{|\kappa|}\lambda_\mathbf{x}^\kappa} \tanh^{-1} (\sqrt{|\kappa|}\| -\mathbf{x} \oplus_\kappa \mathbf{y} \|) \frac{ -\mathbf{x} \oplus_\kappa \mathbf{y} }{\| -\mathbf{x} \oplus_\kappa \mathbf{y} \|}$            & $\log _\mathbf{x}^\kappa(\mathbf{y})=\frac{\cosh ^{-1}\left(\kappa\langle \mathbf{x}, \mathbf{y}\rangle_{\mathcal{L}}\right)}{\sinh \left(\cosh ^{-1}\left(\kappa\langle \mathbf{x}, \mathbf{y}\rangle_{\mathcal{L}}\right)\right)}\left(\mathbf{y}-\kappa\langle \mathbf{x}, \mathbf{y}\rangle_{\mathcal{L}} \mathbf{x}\right)$ \\ \midrule
\textbf{Parallel Transport} & $\mathcal{PT}_{\mathbf{x}\to\mathbf{y}}^\kappa (\mathbf{v}) = \frac{\lambda^\kappa_\mathbf{x}}{\lambda^\kappa_\mathbf{y}} \operatorname{gyr}[\mathbf{y}, -\mathbf{x}]\mathbf{v}$                                                                    & $\mathcal{PT}_{\mathbf{x} \to\mathbf{y}}^\kappa(\mathbf{v})=\mathbf{v}-\frac{\kappa\langle \mathbf{y}, \mathbf{v}\rangle_{\mathcal{L}}}{1+\kappa\langle \mathbf{x}, \mathbf{y}\rangle_{\mathcal{L}}}(\mathbf{x}+\mathbf{y})$                                                                                                         \\ \bottomrule
\end{tabular}

\label{tb:summary-operation}
\end{table*}

\subsection{Extended Review of Riemannian Geometry}
\label{sec:extend-review-riemannian}
A \text{manifold} $\mathcal{M}$ of $n$-dimension is a topological space that is locally-Euclidean, \textit{i.e.} each point's neighborhood can be approximated by Euclidean space $\mathbb{R}^n$. The tangent space $\mathcal{T}_\mathbf{x} \mathcal{M}$ at $\mathbf{x}\in \mathcal{M}$ is the vector space of all tangent vectors at $\mathbf{x}$, the tangent space is isomorphic to $\mathbb{R}^n$. A Riemannian manifold $(\mathcal{M},g)$ is a manifold $\mathcal{M}$ equipped with Riemannian metric $g = (g_\mathbf{x})_{\mathbf{x}\in \mathcal{M}}$, $g$ is a smooth collection of inner products on the tangent space of $\mathbf{x}\in \mathcal{M}$, \textit{i.e.} $g_\mathbf{x} : \mathcal{T}_\mathbf{x} \mathcal{M} \times \mathcal{M}_\mathbf{x} \mathcal{M} \to \mathbb{R}$. It is natural to deduce Riemannian norm using metric $g$, \textit{i.e.} for any vector $\mathbf{v}\in\mathcal{T}_\mathbf{x}\mathcal{M}$, $\|\mathbf{v}\|_{g_\mathbf{x}} = \sqrt{g_\mathbf{x} (\mathbf{v}, \mathbf{v})}$. The definition of inner-product and the induced norm can induce various geometric notions such as distances between points on $\mathcal{M}$, and angles between vectors on $\mathcal{T}_\mathbf{x}\mathcal{M}$.

\subsubsection{Length of Curve and Geodesic}
In the notion of differential geometry, a \textit{curve} $\gamma$ is defined as a mapping from an interval to the manifold, \textit{i.e.} $\gamma: [a,b]\to \mathcal{M}$. The length of curve is defined as $L(\gamma) = \int_{t_1}^{t_2} \sqrt{g_{\gamma(t)} ( \gamma'(t), \gamma'(t) )} dt$ where $t\in [a,b]$. Given the curve, let $\mathbf{x} = \gamma(a) \in \mathcal{M}$ and $\mathbf{y} = \gamma(b) \in \mathcal{M}$, the minimum distance between $\mathbf{x}$ and $\mathbf{y}$ on the manifold is called the \textit{geodesic}
\begin{equation}
    d(\mathbf{x}, \mathbf{y}) := \inf_\gamma L(\gamma)  = \inf_\gamma \int_{a}^{b} \sqrt{g_{\gamma(t)} ( \gamma'(t), \gamma'(t) )} dt,
    \label{geodesic-general}
\end{equation}
it can be considered as a curve that minimizes the length.

\begin{figure}[h]
\centering
\includegraphics[width=.6\linewidth]{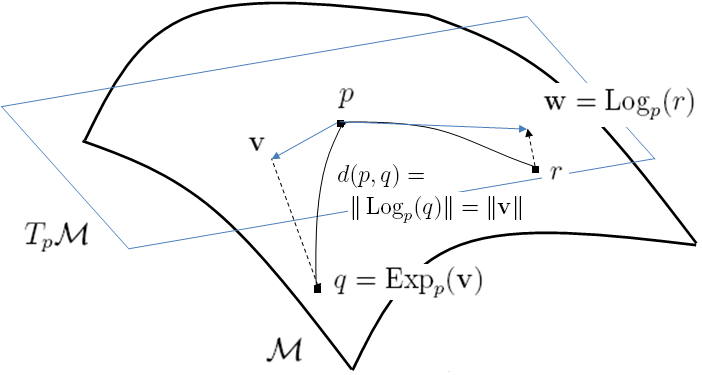}
\caption{Exponential and logarithmic maps of non-Euclidean (Riemannian) manifolds. Illustration from \cite{angulo2014structure}.}
\label{fig:exp-log-map}
\end{figure}

\subsubsection{Exponential and Logarithmic Map}
The exponential map $\exp_\mathbf{x}: \mathcal{T}_\mathbf{x}\mathcal{M} \to \mathcal{M}$ at point $\mathbf{x}$ defines a way to project a vector $\mathbf{v}$ of tangent space $\mathcal{T}_\mathbf{x}\mathcal{M}$ at point $\mathbf{x}$, to a point $\mathbf{y} = \exp_\mathbf{x}(\mathbf{v}) \in \mathcal{M}$ on the manifold. The exponential map is generally used to parameterize a geodesic $\gamma$ uniquely defined by $\gamma(0) = \mathbf{x}$ and $\gamma'(0) = \mathbf{v}$. The logarithmic map $\log_\mathbf{x}: \mathcal{M} \to \mathcal{T}_\mathbf{x}\mathcal{M}$ is the inverse of exponential map, it defines a mapping of an arbitrary vector on $\mathcal{M}$ to the tangent space $\mathcal{T}_\mathbf{x}\mathcal{M}$. Different manifolds have different ways to define exponential maps and logarithmic maps.

\subsubsection{Parallel Transport}
For two points $\mathbf{x},\mathbf{y}\in \mathcal{M}$, the parallel transport $\mathcal{PT}_{\mathbf{x}\to \mathbf{y}}: \mathcal{T}_\mathbf{x}\mathcal{M} \to \mathcal{T}_\mathbf{y}\mathcal{M}$ defines a mapping from a vector $\mathbf{v}$ in $\mathcal{T}_\mathbf{x}\mathcal{M}$ to $\mathcal{T}_\mathbf{y}\mathcal{M}$ that moves $\mathbf{v}$ along the geodesic from $\mathbf{x}$ to $\mathbf{y}$. The parallel transport preserves the metric tensors.

\subsection{Poincar\'e Ball Model}
\label{app:poincare-ball}
The Poincar\'e ball is defined as the Riemannian manifold $\mathbb{D}^n_\kappa = (\mathcal{D}^n_\kappa, g^{\mathbb{D}})$, with point set $\mathcal{D}^n_\kappa = \{\mathbf{x} \in \mathbb{R}^n : \|\mathbf{x}\| < -\frac{1}{\kappa}\}$ and Riemannian metric
\begin{equation}
    g^{\mathbb{D}}_\mathbf{x} = (\lambda_\mathbf{x}^\kappa)^2 g^\mathbb{E},
\end{equation}
where $\lambda_\mathbf{x}^\kappa = \frac{2}{1+\kappa\|\mathbf{x}\|^2}$ (the conformal factor) and $g^{\mathbb{E}} = \mathbf{I}_n$ (the Euclidean metric tensor). $\kappa<0$ is a hyperparameter denoting the sectional curvature of the manifold. 


\subsubsection{Gyrovector Addition}
Existing studies \cite{ganea2018hyperbolic, shimizu2020hyperbolic, mathieu2019continuous} adopt the gyrovector space framework \cite{ungar2005analytic, ungar2008gyrovector} as a non-associative algebraic formalism for hyperbolic geometry. The gyrovector operation $\oplus_\kappa$is termed \textit{M\"obius addition}
\begin{align}
    &\mathbf{x} \oplus_\kappa \mathbf{y} := \frac{(1 - 2\kappa\innerproductcomma{\mathbf{x}}{\mathbf{y}} - \kappa\|\mathbf{y}\|^2)\mathbf{x} + (1 + \kappa\|\mathbf{x}\|^2)\mathbf{y}}{1 - 2\kappa\innerproductcomma{\mathbf{x}}{\mathbf{y}} + \kappa^2 \|\mathbf{x}\|^2 \|\mathbf{y}\|^2}, \label{eq:mobius_add}
\end{align}
where $\mathbf{x},\mathbf{y}\in \mathbb{D}^n_\kappa$. The M\"obius addition defines the addition of two gyrovectors that preserves the summation on the manifold. The induced M\"obius subtraction $\ominus_\kappa$ is defined as $\mathbf{x} \ominus_\kappa \mathbf{y} = \mathbf{x} \oplus_\kappa (-\mathbf{y})$. 

\subsubsection{Tangent Space Operations}
As described in Sec. \ref{sec:bried-review-riemannian}, the \textit{exponential map} projects a vector $\mathbf{v} \in \mathcal{T}_\mathbf{x}\mathbb{D}_\kappa$ in the tangent space at $\mathbf{x}$ to a point on $\mathbb{D}_\kappa$, while the \textit{logarithmic map} projects the manifold vector back to the tangent space
\begin{align}
    &\exp_\mathbf{x}^\kappa (\mathbf{v}) = \mathbf{x} \oplus_\kappa \left( \tanh (\sqrt{|\kappa|} \frac{\lambda_\mathbf{x}^\kappa \|\mathbf{v}\|}{2}) \frac{\mathbf{v}}{\sqrt{|\kappa|}\|\mathbf{v}\|}  \right),\label{eq:expmap}\\
    &\log_\mathbf{x}^\kappa (\mathbf{y}) = \frac{2 \tanh^{-1} (\sqrt{|\kappa|}\| -\mathbf{x} \oplus_\kappa \mathbf{y} \|) (-\mathbf{x} \oplus_\kappa \mathbf{y})}{\sqrt{|\kappa|}\lambda_\mathbf{x}^\kappa \| -\mathbf{x} \oplus_\kappa \mathbf{y} \|}, \label{eq:logmap}
\end{align}
where $\lambda_\mathbf{x}^\kappa = \frac{2}{1+\kappa\|\mathbf{x}\|^2}$ is the \textit{conformal factor} at point $\mathbf{x}$.

\subsubsection{Gyrovector Multiplication}
$\otimes_\kappa$ is the \textit{M\"obius scalar multiplication}. It defines the multiplication between scalar $r$ and a gyrovector. As provided in \cite{ganea2018hyperbolic, chami2019hyperbolic}, the scalar multiplication can be obtained by
\begin{equation}
    r \otimes_\kappa \mathbf{x} = \exp_\mathbf{o}^\kappa (r \log_\mathbf{o}^\kappa (\mathbf{x})),
\label{eq:mscalar-multi-ext}
\end{equation}
where $\mathbf{x}\in \mathbb{D}^n_\kappa/\mathbb{L}^n_\kappa$. One can further extend Eq.~(\ref{eq:mscalar-multi-ext}) to \textit{matrix-vector multiplication}, formulated by
\begin{align}
    &\mathbf{M} \otimes_\kappa \mathbf{x} = \exp_\mathbf{o}^\kappa (\mathbf{M} \log_\mathbf{o}^\kappa (\mathbf{x})), \label{eq:mvector-multi}
\end{align}
where $\mathbf{M}\in \mathbb{R}^{m\times n}$. With broadcasting mechanism \cite{paszke2019pytorch}, we can derive the M\"obius addition and matrix-vector multiplication manipulating on batched representations. 

\subsubsection{Distance Metric}
Since the \textit{geodesic} is the generalized straight line on Riemannian manifold, the distance between two points is essentially the \textit{geodesic length}. For $\mathbf{x}, \mathbf{y} \in \mathbb{D}_\kappa^n$, the distance is given by:
\begin{equation}
    d_\mathbb{D}^\kappa(\mathbf{x},\mathbf{y}) = \frac{2}{\sqrt{|\kappa|}}\tanh^{-1}\left( \sqrt{|\kappa|}\| - \mathbf{x} \oplus_\kappa \mathbf{y}  \| \right).
    \label{eq:poincare-distance}
\end{equation}

\subsection{Lorentz Model}
The Lorentz model, \textit{a.k.a.} the hyperboloid model is defined as the Riemannian manifold $\mathbb{L}^{n}_\kappa = (\mathcal{L}^{n}_\kappa, g^{\mathbb{L}})$, with point set $\mathcal{L}^{n}_\kappa= \{\mathbf{x} \in \mathbb{R}^{n+1}: \innerproductcomma{\mathbf{x}}{\mathbf{x}}_\mathcal{L} = \frac{1}{\kappa}\}$ and Riemannian metric:
\begin{equation}
    g^{\mathbb{L}} = \operatorname{diag}(-1, 1, \cdots, 1).
\end{equation}
The point set $\mathcal{L}^{n}_\kappa$ is geometrically the upper sheet of hyperboloid in an $(n+1)$-dimensional Minkowski space with the origin $(\sqrt{-\frac{1}{\kappa}}, 0,\cdots,0)$. Each point in $\mathbb{L}^{n}_\kappa$ has the form $\mathbf{x} = \begin{bmatrix}
    x_t \\
    \mathbf{x}_s
\end{bmatrix}$, where $x_t\in \mathbb{R}$ is a scalar and $\mathbf{x}_s\in \mathbb{R}^n$. Given $\mathbf{x}, \mathbf{y} \in \mathbb{L}^n_\kappa$, the Lorentzian inner product
\begin{align}
    \innerproductcomma{\mathbf{x}}{\mathbf{y}}_\mathcal{L} &= -x_t y_t + \mathbf{x}^T_s \mathbf{y}_s\\
    & = \mathbf{x}^T \operatorname{diag}(-1, 1, \cdots, 1) \mathbf{y}.
\end{align}

\subsection{Summary of Operations}
\label{app:summary-of-operations}
We summarize the hyperbolic operations of the Poincar\'e ball model and Lorentz model in Tab. \ref{tb:summary-operation}. We denote $\|\cdot\|$ and $\innerproductcomma{\mathbf{x}}{\mathbf{y}}_2$ as the Euclidean L2-norm and inner product, $\innerproductcomma{\mathbf{x}}{\mathbf{y}}_\mathcal{L}$ as the Lorentzian inner product $\mathbf{x}^T \operatorname{diag}(-1, 1, \cdots, 1)\mathbf{y}$ and $\|\cdot\|_\mathcal{L}$ as Lorentzian norm where $\|\mathbf{x}\|_\mathcal{L}^2 = \innerproductcomma{\mathbf{x}}{\mathbf{x}}_\mathcal{L}$. For systematic gyrovector space treatment, please refer to \cite{ungar2008gyrovector}.
\section{Implementation Notes}
\subsection{Fixing Numerical Instability via Clipping}
In our implementation, 32-bit float tensors are used for all manipulations. In practice, computing the square of a float tensor $\mathbf{x}$ requires lower numeric limit, for instance, if $\mathbf{x} = 10^{-a}$, the precision required for $\mathbf{x}^2$ is at least $10^{-2a}$. The smallest value for a 32-bit float tensor is approximately $1.175494 \times 10^{-38}$, thus if $a>19$, $\mathbf{x}^2$ will likely be out of memory and result in Nan value. In such cases, the tensors are out of hyperbolic space and could mislead the training. To avoid numerical instability, we employ feature clipping:
\begin{equation}
\begin{aligned}
    \operatorname{Clip}(\mathbf{x}; a, \kappa) =
\begin{cases}
\frac{1 - \epsilon}{\sqrt{|\kappa|}\|\mathbf{x}\|}\mathbf{x}, &\|\mathbf{x}\| \ge \frac{1 - \epsilon}{\sqrt{|\kappa|}}\\
\frac{a}{\|\mathbf{x}\|}\mathbf{x}, &\|\mathbf{x}\| < a\\
\mathbf{x}, &\text{otherwise}
\end{cases}
\end{aligned}
\end{equation}
where the $a$ is usually fixed to $10^{-15}$. The feature clipping is adopted in many parts of our implementation where we are likely to get Nan values.

\subsection{Dropout}
Dropout is an essential technique for preventing over-fitting in hyperbolic models. In our implementation, we perform dropout on $\boldsymbol{\omega} = \phi(\cdot)$ in Eq.~(\ref{eq:poincare-fc-layer}). In the following, we verify that the hyperbolic representation in the Poincar\'e ball model after dropout is manifold-preserving. 

For arbitrary $\boldsymbol{\omega}$, following Eq.~(\ref{eq:poincare-fc-layer}) we have
\begin{align}
    \|\mathcal{F}_\mathbb{D}^\kappa(\cdot)\| &= \|\frac{\boldsymbol{\omega}}{1+\sqrt{1 - \kappa\|\boldsymbol{\omega}\|^2}}\|\\
    &= \sqrt{\frac{\|\boldsymbol{\omega}\|^2}{(1 + \sqrt{1 - \kappa\|\boldsymbol{\omega}\|^2})^2}}\\
    &= \sqrt{\frac{\|\boldsymbol{\omega}\|^2}{1 + 2\sqrt{2 - \kappa\|\boldsymbol{\omega}\|^2  }- \kappa\|V\|^2}}\\
    &= \sqrt{\frac{\|\boldsymbol{\omega}\|^2}{1 + 2\sqrt{2 + |\kappa|\|\boldsymbol{\omega}\|^2  }+ |\kappa|\|V\|^2}}\\
    &= \sqrt{\frac{1}{\frac{1}{\|\boldsymbol{\omega}\|^2} + 2\sqrt{\frac{2}{\|\boldsymbol{\omega}\|^4} + \frac{|\kappa|}{\|\boldsymbol{\omega}\|^2}  }+ |\kappa|}}\\
    &< \frac{1}{\sqrt{|\kappa|}}\label{eq:dropout-v-final-ineq}.
\end{align}
Notably, $\|\mathcal{F}_\mathbb{D}^\kappa(\cdot)\|$ reaches $\frac{1}{\sqrt{|\kappa|}}$ and $0$ respectively when $\boldsymbol{\omega}$ approximate infinity and when $\boldsymbol{\omega}=0$. Thus the range of each component in $\boldsymbol{\omega}$ is $[-\infty, \infty]$, which is in coincidence with the Euclidean space representation. Hence, the dropout can be directly applied to $\boldsymbol{\omega}$ without further generalization.


\subsection{Non-linear Activation}
The non-linear activation prevents multi-layer GCNs from collapsing into single-layer networks. Activation functions are typically applied after neighborhood aggregation and before linear transformation step for optimal performance \cite{kipf2016semi}. In the Poincar\'e ball model, applying ReLU is manifold-preserving (\textit{i.e.} $\forall \mathbf{x}\in \mathbb{D}_\kappa^n$ we have $\sigma(\mathbf{x})\in \mathbb{D}_\kappa^n$) since ReLU only cut-off the negative half and remain the positive half unchanged.
\section{Proofs and Derivations}

\subsection{Proof of Proposition \ref{lemma:shrinking-property}}
\begin{proof} We start the proof from the definition of hyperbolic Dirichlet energy on the Poincar\'e ball model. Given the closed form solution of the distance function $d_{\mathbb{D}}$, we have
\begin{align}
&f^{\mathbb{D}}_\text{DE}(\tilde{\mathbf{P}}\otimes_\kappa\mathbf{H}) = \label{lemma1-proof-begin}\\
& \frac{1}{2} \sum_{i,j} a_{ij} d_{\mathbb{D}}^2\left( \frac{1}{\sqrt{1+d_i}} \otimes_\kappa\tilde{\mathbf{P}}\otimes_\kappa \mathbf{h}_i^{}  ,\frac{1}{\sqrt{1+d_j}} \otimes_\kappa\tilde{\mathbf{P}}\otimes_\kappa \mathbf{h}_j^{}\right)  \nonumber\\
& = \frac{1}{2} \sum_{i,j} a_{ij} \left(\frac{2}{\sqrt{|\kappa|}}\tanh^{-1}\left( \sqrt{|\kappa|} \|\rho(\mathbf{H})\| \right)\right)^2,\label{property-de-derivatrion-1}
\end{align}
where function $\rho$ can be expanded as
\begin{align}
&\rho(\mathbf{H}) = \nonumber\\
&\left(-\frac{1}{\sqrt{1+d_i}}\otimes_\kappa (\tilde{\mathbf{P}}\otimes_\kappa \mathbf{h}_i)\right)\oplus_\kappa\left(\frac{1}{\sqrt{1+d_j}}\otimes_\kappa (\tilde{\mathbf{P}}\otimes_\kappa \mathbf{h}_j)\right)\\
&=\left(-\frac{\tilde{\mathbf{P}}}{\sqrt{1+d_i}}\otimes_\kappa \mathbf{h}_i\right)\oplus_\kappa\left(\frac{\tilde{\mathbf{P}}}{\sqrt{1+d_j}}\otimes_\kappa \mathbf{h}_j\right)\\
&= \tilde{\mathbf{P}} \left(\left(-\frac{1}{\sqrt{1+d_i}}\otimes_\kappa \mathbf{h}_i\right)\oplus_\kappa\left(\frac{{1}}{\sqrt{1+d_j}}\otimes_\kappa \mathbf{h}_j\right)\right).
\end{align}
One can easily prove that for all $\mathbf{A}\in \mathbb{R}^{n\times n}$ and $\mathbf{x}\in \mathbb{R}^n$, $\|\mathbf{A}\mathbf{x}\|\le \|\mathbf{A}\|_F \|\mathbf{x}\|$, thus the $\|\rho(\tilde{\mathbf{P}}, \mathbf{H}^{(l)})\|$ in Eq. (\ref{property-de-derivatrion-1}) can be further written as
\begin{align}
    &\|\rho(\mathbf{H})\|\nonumber\\
    &= \left\|\tilde{\mathbf{P}} \left(\left(-\frac{1}{\sqrt{1+d_i}}\otimes_\kappa \mathbf{h}_i\right)\oplus_\kappa\left(\frac{1}{\sqrt{1+d_j}}\otimes_\kappa \mathbf{h}_j\right)\right)\right\|\\
    &\le \|\tilde{\mathbf{P}} \|_F \left\|\left(-\frac{1}{\sqrt{1+d_i}}\otimes_\kappa \mathbf{h}_i\right)\oplus_\kappa\left(\frac{1}{\sqrt{1+d_j}}\otimes_\kappa \mathbf{h}_j\right)\right\|.
\end{align}
Since $\tilde{\mathbf{P}}$ is normalized (can be row normalize or diagonal normalize), the Frobenius norm $\|\tilde{\mathbf{P}}\|_F\ge 1$ always establish, thus the norm of $\rho$:
\begin{align}
    &\|\rho(\mathbf{H})\| \nonumber\\
    &\le\left\|\left(-\frac{1}{\sqrt{1+d_i}}\otimes_\kappa \mathbf{h}_i\right)\oplus_\kappa\left(\frac{1}{\sqrt{1+d_j}}\otimes_\kappa \mathbf{h}_j\right)\right\|.
\end{align}
Known that $\tanh^{-1}(\cdot)$ is a monotonically increasing function, Eq. (\ref{property-de-derivatrion-1}) can be further derived as
\begin{align}
    & \frac{1}{2} \sum_{i,j} a_{ij} \left(\frac{2}{\sqrt{|\kappa|}}\tanh^{-1}\left( \sqrt{|\kappa|} \|\rho(\tilde{\mathbf{P}}, \mathbf{H}^{})\| \right)\right)^2\\
    & \le \frac{1}{2} \sum_{i,j} a_{ij} \biggr(\frac{2}{\sqrt{|\kappa|}}\tanh^{-1}\biggr( \sqrt{|\kappa|} \biggr\|\biggr(-\frac{1}{\sqrt{1+d_i}}\otimes_\kappa \mathbf{h}_i\biggr)\nonumber \\&
    \qquad\qquad\qquad\qquad \oplus_\kappa  \biggr(\frac{1}{\sqrt{1+d_j}}\otimes_\kappa \mathbf{h}_j\biggr)\biggr\| \biggr)\biggr)^2\\
    &= \frac{1}{2} \sum_{i,j} a_{ij} d_{\mathbb{D}}^2\left( \frac{1}{\sqrt{1+d_i}} \otimes_\kappa \mathbf{h}_i^{}  ,\frac{1}{\sqrt{1+d_j}} \otimes_\kappa \mathbf{h}_j^{}\right)\\
    &= f_\text{DE}^\mathbb{D}(\mathbf{H}) \label{lemma1-proof-end}.
\end{align}
Eq. (\ref{lemma1-proof-begin}-\ref{lemma1-proof-end}) concludes the proof.
\end{proof}
\section{Appendix D: Additional Evaluations}

\subsection{Additional Assessment on DISEASE Dataset}
\begin{table}[t]
    \centering
    \caption{Model evaluation in DISEASE.}
    \resizebox{.7\linewidth}{!}{\begin{tabular}{llcc} 
\toprule
\textbf{Dataset ($\delta$)}     &  & \multicolumn{2}{c}{\textbf{Disease ($\delta = 0$)}}                      \\ 
\cmidrule{3-4}
\textbf{Task}                   &  & LP                       & NC                                            \\ 
\midrule
GCN                             &  & 58.00$\pm 1.41$          & 69.79$\pm 0.54$                               \\
GAT                             &  & 58.16$\pm 0.92$          & 70.04$\pm 0.49$                               \\
GraphSAGE                       &  & 65.93$\pm 0.29$          & 70.10$\pm 0.49$                               \\
SGC                             &  & 65.34$\pm 0.28$          & 70.94$\pm 0.59$                               \\
GCNII (8)                       &  & /                        & 88.83$\pm 1.32$                               \\
GCNII (16)                      &  & /                        & \multicolumn{1}{l}{\textbf{96.71$\pm 2.78$}}  \\ 
\midrule
HGNN                            &  & 81.54$\pm 1.22$          & 81.27$\pm 3.53$                               \\
HGCN                            &  & 90.80$\pm 0.31$          & 88.16$\pm 0.76$                               \\
HGAT                            &  & 87.63$\pm 1.67$          & 90.30$\pm 0.62$                               \\
HyboNet                         &  & \textbf{96.80$\pm 0.40$} & 96.00$\pm 1.00$                               \\ 
\midrule
\textcolor{blue}{DeepHGCN (2)}  &  & 92.10$\pm 0.44$          & 89.90$\pm 1.33$                               \\
\textcolor{blue}{DeepHGCN (8)}  &  & 95.70$\pm 0.32$          & 92.51$\pm 2.10$                               \\
\textcolor{blue}{DeepHGCN (16)} &  & 95.51$\pm 1.52$          & 93.70$\pm 1.52$                               \\
\bottomrule
\end{tabular}}
    \label{tab:disease_dataset}
\end{table}
We provide the performance comparisons of models in Disease dataset in Tab.~\ref{tab:disease_dataset}. We observed that, when increasing the depth of GCNII, the validation accuracy of node classification will reach almost $99\%$, suggesting Euclidean space is also capable for embedding DISEASE. Although the hyperbolic space is more natural for embedding tree-like data and therefore could yield improved performance on the DISEASE dataset, it looks the potential improvement we could expect over GCNII is marginal. 

According to our experiments, GCNII, HyboNet and DeepHGCN are all capable of fitting the data, that is, the training accuracy can reach $100\%$ while other models are unable to fit the data even without dropout and weight regularization. We infer that the performance gap observed in the test set is attributed to the poor generalization ability of hyperbolic classifiers, thereby suggesting an intriguing direction for future research.

\bibliographystyle{IEEEtran}
\bibliography{bibtex/bib/egbib}

\end{document}